\documentclass{article} 

\PassOptionsToPackage{sort&compress}{natbib}

\usepackage{iclr2027_conference,times}
\usepackage{todonotes}

\usepackage{amsmath,amsfonts,bm}

\def\figref#1{figure~\ref{#1}}

\def\secref#1{section~\ref{#1}}

\def\eqref#1{equation~\ref{#1}}

\def\plaineqref#1{\ref{#1}}

\def\algref#1{algorithm~\ref{#1}}

\def\1{\bm{1}}

\DeclareMathAlphabet{\mathsfit}{\encodingdefault}{\sfdefault}{m}{sl}
\SetMathAlphabet{\mathsfit}{bold}{\encodingdefault}{\sfdefault}{bx}{n}

\def\gX{{\mathcal{X}}}

\newcommand{\E}{\mathbb{E}}

\newcommand{\R}{\mathbb{R}}

\newcommand{\softmax}{\mathrm{softmax}}

\newcommand{\KL}{D_{\mathrm{KL}}}

\DeclareMathOperator*{\argmax}{arg\,max}

\usepackage{xcolor}
\usepackage{listings}
\usepackage{amssymb}
\usepackage{amsthm}
\usepackage{booktabs}
\usepackage{algorithm}
\usepackage{algorithmic}
\usepackage{url}
\usepackage{rotating}
\usepackage{subcaption}
\usepackage{xcolor}
\usepackage{colortbl}
\usepackage{graphicx}
\usepackage{enumitem}
\usepackage{wrapfig}
\usepackage{hyperref}   

\newif\ifshowrevisions
\DeclareRobustCommand{\rev}[1]{\ifshowrevisions\textcolor{blue}{#1}\else#1\fi}
\definecolor{codekw}{RGB}{0,0,160}
\definecolor{codecm}{RGB}{110,110,110}
\definecolor{codestr}{RGB}{0,110,0}
\definecolor{codeframe}{gray}{0.55}
\definecolor{codenum}{gray}{0.45}
\definecolor{codemark}{RGB}{170,0,0}

\lstdefinestyle{auditpy}{
  language=Python,
  basicstyle=\ttfamily\scriptsize,
  keywordstyle=\color{codekw}\bfseries,
  commentstyle=\color{codecm}\itshape,
  stringstyle=\color{codestr},
  numbers=left,
  numberstyle=\ttfamily\tiny\color{codenum},
  numbersep=7pt,
  frame=single,
  rulecolor=\color{codeframe},
  framesep=4pt,
  xleftmargin=2.4em,
  columns=fullflexible,
  keepspaces=true,
  breaklines=true,
  breakatwhitespace=true,
  breakindent=1.5em,
  postbreak=\mbox{\textcolor{codemark}{$\hookrightarrow$}\space},
  showstringspaces=false,
  captionpos=b,
  aboveskip=1.4ex,
  belowskip=0.8ex,
  escapeinside={(*@}{@*)},
}
\newtheorem{proposition}{Proposition}

\theoremstyle{remark}

\theoremstyle{definition}
\newtheorem{definition}{Definition}[section]

\renewcommand{\eqref}[1]{Eq.~(\ref{#1})}
\renewcommand{\secref}[1]{Section~\ref{#1}}
\renewcommand{\figref}[1]{Fig.~\ref{#1}}
\renewcommand{\algref}[1]{Algorithm~\ref{#1}}
\newcommand{\appref}[1]{Appendix~\ref{#1}}

\newcommand{\propref}[1]{Proposition~\ref{#1}}

\newcommand{\defref}[1]{Definition~\ref{#1}}
\newcommand{\tabref}[1]{Table~\ref{#1}}

\newcommand{\pstar}{p^{\star}}                
\newcommand{\Pstar}{P^{\star}}                
\newcommand{\PI}{\mathrm{PI}}
\newcommand{\EI}{\mathrm{EI}}
\newcommand{\acq}{\alpha}                     
\newcommand{\dataset}{\mathcal{D}}

\newcommand{\vocab}{\mathcal{V}}

\newcommand{\zhat}{\hat z}                         
\newcommand{\ztilt}{\hat z^{\mathrm{tilt}}}        
\newcommand{\ztiltg}{\hat z^{\mathrm{tilt},\gamma}}

\definecolor{cAFGGcdcd}{HTML}{E6194B}
\definecolor{cAFGGseq} {HTML}{FF7F0E}
\definecolor{cInvBO}   {HTML}{1F77B4}
\definecolor{cCoBO}    {HTML}{2CA02C}
\definecolor{cLOLBO}   {HTML}{9467BD}
\definecolor{cCOWBOYS} {HTML}{8C564B}

\definecolor{sigwin}{HTML}{D6F0D6}

\title{Acquisition Function-Guided Generation for Bayesian \rev{Optimization} over Discrete Spaces}

\author{Tommaso Ancilli$^{1,2*}$, Thomas Wedenig$^{1,2}$\thanks{Joint first author}\ , Henry Moss$^{3}$, Robert Peharz$^{1,2}$ \\
$^{1}$Institute of Machine Learning and Neural Computation, TU Graz, Austria \\
$^{2}$Graz Center for Machine Learning (GraML), Austria \\
$^{3}$School of Mathematical Sciences, Lancaster University, UK \\
\texttt{\{name.surname\}@tugraz.at}, \texttt{henry.moss@lancaster.ac.uk}
}

\iclrfinalcopy 

\begin{document}

\maketitle

\begin{abstract}
Bayesian optimization (BO) over discrete spaces is challenging due to missing gradients and the sparsity of valid solutions.
Latent-space BO sidesteps both issues by fitting a surrogate in the latent space of a generative model, which however substantially complicates the direct incorporation of domain knowledge that is naturally expressed in the original discrete space.
In this work, we instead combine a pretrained generative model with a surrogate defined directly in data space, where inductive biases such as domain-specific kernels can be readily incorporated.
Rather than maximizing the acquisition function over the discrete design space, our method samples candidates from the generative model under a distribution tilted by the acquisition function.
To recover useful gradients for sampling, we embed this tilted distribution into a diffusion bridge that yields a smooth, posterior-averaged acquisition function.
This leads to \emph{acquisition function guided generation} (AFGG), which we instantiate with a categorical diffusion model and an autoregressive normalizing flow.
Across a broad suite of molecular benchmarks, the diffusion model variant performs on par with leading BO methods, while the flow model variant outperforms state-of-the-art methods in most cases and remains highly competitive otherwise. Our implementation is available at \url{https://anonymous.4open.science/r/afgg}.


\end{abstract}


\section{Introduction}
\label{sec:intro}

Bayesian optimization (BO) offers a sample-efficient approach to problems such as molecular and protein design, where objective evaluations are costly and the search space is discrete and structured 
\citep{garnett2023bayesian, yang2025active}.
Unfortunately, direct optimization over such spaces is hard because valid designs are sparsely populated in a combinatorial space and standard continuous optimizers have no useful gradients to follow \citep{moss2020boss}.
Latent-space Bayesian optimization (LSBO) addresses both challenges by training a generative model over the discrete space---thereby capturing the sparse structure of valid designs---and by performing BO in its continuous latent space, where gradients for acquisition optimization are readily available \citep{gomez2018automatic}.

However, domain knowledge is harder to incorporate in an abstract latent space, while in data space, such knowledge can often be encoded directly through inductive biases such as domain-specific kernels \citep{rogers2010extended}.
Furthermore, the latent space must also ``do two jobs at once:'' support faithful generation and provide a geometry in which the objective is easy to model by a surrogate.
These goals need not align, which motivates many LSBO methods to adapt the latent representation during BO \citep{maus2022local, lee2023cobo, chu2024inversion, tripp2020sample, lee2025nfbo, deshwal2021}, despite the small-data regime in which such adaptation can overfit \citep{ober2021promisespitfallsdeepkernel}.
Moreover, even a strong generative model only approximates the sparse set of valid designs, so acquisition optimization can move into poorly supported regions of latent space where decoding becomes unreliable \citep{moss2025cowboys, bodin2025lincomblatgens}.

A natural strategy for avoiding these drawbacks is to decouple the two roles: use a strong pretrained generative model to capture the sparse structure of valid designs, while keeping the surrogate and acquisition function in data space, \rev{so that} domain knowledge and inductive biases \rev{can be included}.
To the best of our knowledge, COWBOYS \citep{moss2025cowboys} is the only prior BO method that follows this strategy, but it is tied to a VAE as the generative model and probability of improvement as the acquisition function.

In this paper, we propose \emph{acquisition function guided generation} ({AFGG}), which replaces acquisition function maximization with sampling from a distribution that tilts a frozen pretrained generative model toward designs favored by the acquisition function.
The construction is generic in both components: it requires only a generative model that supports guided sampling and any non-negative acquisition function that can be evaluated pointwise; the latter need neither be differentiable nor define a probability.
Since the discrete, combinatorial design space makes direct sampling from the tilted distribution difficult, we embed it into a diffusion process over the generative model's continuous space.
At each noise level, this yields a smooth posterior-averaged acquisition function that provides the gradients needed for guided sampling.
The resulting gradients can then guide sampling in the same spirit as guidance in diffusion models \citep{dhariwal2021diffusion, ho2022classifier}, which gives the method its name.

Our contributions are threefold:
\begin{itemize}[leftmargin=1.4em,itemsep=2pt,topsep=3pt]

\item \textbf{Acquisition function guided generation.}
We formulate the BO inner loop as sampling from an acquisition-tilted distribution that combines a frozen pretrained generative model with a surrogate defined directly in data space (\secref{sec:method-goal}).
We show that this target can be recovered exactly in the generative model's continuous space and that the resulting guidance term reduces to replacing the model's posterior mean by its acquisition-weighted counterpart (\secref{sec:method-estimator}).

\item \textbf{A unified construction for different generative models.}
We derive AFGG from a common diffusion-bridge construction and show how it applies to both categorical diffusion models and autoregressive normalizing flows (\secref{sec:method-two}).
The generator remains frozen throughout BO, while the surrogate and acquisition function stay in data space.

\item \textbf{State-of-the-art empirical performance.}
Across a broad suite of molecular design benchmarks under tight evaluation budgets, AFGG with the categorical diffusion model performs on par with leading BO methods.
More strikingly, AFGG with the flow model achieves state-of-the-art performance on most benchmarks and remains highly competitive on the rest (\secref{sec:experiments}).
\end{itemize}

\section{Background}
\label{sec:background}

\subsection{Bayesian Optimization over Discrete Spaces}
\label{sec:bo}

We consider Bayesian optimization over discrete, structured design spaces. 
In particular, we focus on sequence-valued settings and denote the design space as $\gX := \vocab^{L}$, the set of length-$L$ sequences over a discrete vocabulary $\vocab$ of size $V$.
We assume that valid or useful designs occupy a constrained and sparsely populated subset of the combinatorial space.
This setting covers, for example, molecules represented as SELFIES strings \citep{krenn2020selfies} and proteins represented as amino-acid sequences.
We seek the maximizer $w^\ast$ of an expensive-to-evaluate and black-box objective $f \colon \gX \to \R$, \rev{which provides no gradients and returns} measurements $y$ \rev{that may be} contaminated by Gaussian noise:
\begin{equation}
  w^\ast \in \argmax_{w\in\gX} ~f(w),
  \qquad
  y(w) = f(w)+\varepsilon_{\mathrm{obs}},\quad \varepsilon_{\mathrm{obs}}\sim\mathcal{N}(0,\sigma_{\mathrm{obs}}^2).
  \label{eq:bbo}
\end{equation}

Bayesian optimization (BO) fits a probabilistic surrogate to the observations collected so far and uses its predictive uncertainty to choose the next design to evaluate \citep{shahriari2016taking, garnett2023bayesian}. 
Throughout this work, we use Gaussian processes (GPs) \citep{rasmussen2006gaussian}, whose posterior mean and variance are available in closed form. 
However, our method is not tied to GPs and only requires a probabilistic surrogate from which an acquisition function can be computed or approximated.

Defining the surrogate directly on the discrete space allows domain knowledge to be incorporated through suitable inductive biases, such as domain-specific kernels. For example, in our molecular experiments we use the Tanimoto kernel \citep{ralaivola2005graph,tripp2023tanimoto} on binary Morgan fingerprints $\psi(w)\in \{0,1\}^{d_{\text{fp}}}$, where usually $d_{\text{fp}}=2048$ \citep{rogers2010extended},
\begin{equation}
k(w,w')
= \frac{\langle \psi(w),\psi(w')\rangle}
{\rev{\lVert\psi(w)\rVert}^2+\rev{\lVert\psi(w')\rVert}^2-\langle \psi(w),\psi(w')\rangle}
\in[0,1].
\label{eq}
\end{equation}

An acquisition function $\acq \colon \gX\to\R$ ranks candidate designs by balancing predicted performance against uncertainty.
We consider two standard choices, probability of improvement (PI) and expected improvement (EI), relative to the incumbent $y^\star$:
\begin{equation}
\acq^{\PI}(w)=\Pr\big[f(w)\ge y^\star\mid\dataset\big]\in[0,1],
\qquad
\acq^{\EI}(w)=\E\big[\max(f(w)-y^\star,0)\mid\dataset\big]\in[0,\infty),
\label{eq:pi-ei}
\end{equation}
where $\dataset=\{(w_j,y_j)\}_{j=1}^n$ denotes the observations collected so far.
Both are available in closed form under a GP \citep{ament2023unexpected}.
PI is a probability, whereas EI is non-negative but potentially unbounded; this distinction will matter for our method.
Standard BO selects the next design by maximizing the acquisition function $w_{\mathrm{next}}\in\argmax_{w\in\gX} ~ \acq(w)$.
For discrete, combinatorial $\gX$, this inner optimization is generally hard, and gradient-based methods for continuous BO do not apply directly.

\subsection{Generative Models over Sequences}
\label{sec:genmodels}

Our method requires a pretrained generative model over $\gX$ that supports guided sampling.
We consider two well-suited model classes for discrete sequences: a categorical diffusion model and an autoregressive normalizing flow.
Both connect the discrete design space $\gX = \vocab^{L}$ to a continuous embedding space $\R^{L\times d}$ in the same way: an embedding table $E\in\R^{|\vocab|\times d}$ (learned by the model) with rows $\rev{e}_v$, $v\in\vocab$\rev{,} maps a sequence to its embeddings, and a deterministic decode map $\tau\colon\R^{L\times d}\to\gX$ clamps each position to the closest token in cosine similarity,
\begin{equation}
  x = E(w) \triangleq \big(\rev{e}_{w_1},\dots,\rev{e}_{w_L}\big)\in\R^{L\times d},
  \qquad
  \tau(x)_i=\argmax_{v\in\vocab}\cos\big(x_i, \rev{e}_v \big),\quad i=1,\dots,L .
  \label{eq:embed-decode}
\end{equation}

\textbf{Continuous diffusion for categorical data (CDCD)}
\citep{dieleman2022continuous} runs a variance-exploding diffusion \citep{karras2022elucidating} on the embeddings,
\begin{equation}
  z_t = x + t\,\epsilon,\qquad \epsilon\sim\mathcal{N}(0,\rev{I}),
  \label{eq:cdcd-forward}
\end{equation}
so that the forward kernel is $q_t(z_t\mid x)=\mathcal N(z_t;x,t^2\rev{I})$. Samples are generated by integrating the reverse-time SDE or probability-flow ODE from $t=T$ to $t=0$~\citep{song2021score}\rev{,} which requires the score $\nabla_{z_t}\log p_t(z_t)$ of the noised marginal. Since $\nabla_{z_t}\log q_t(z_t\mid x)=(x-z_t)/t^2$ is affine in $x$, the score is the conditional score averaged over the denoising posterior, and in the discrete case, where each position of $x$ is one of $\vocab$'s embeddings, it has the form
\begin{equation}
  \nabla_{z_t}\log p_t(z_t)
  = \frac{\zhat-z_t}{t^2},
  \qquad
  \zhat(z_t,t)_i=\sum_{v=1}^{V} p(w_i=v\mid z_t)\,\rev{e}_v .
  \label{eq:score-interp}
\end{equation}
The score therefore depends on the joint denoising posterior only through its per-position marginals \citep{dieleman2022continuous}, and these are what must be learned: a transformer, trained with categorical cross-entropy on noised embeddings, outputs $L$ mass functions over $\vocab$,
\begin{equation}
  p_\theta(w_i = v\mid z_t,t)=\softmax\big(\mathrm{logits}_\theta(z_t,t)_i\big)_v ,
  \qquad i=1,\dots,L.
  \label{eq:cdcd-posterior}
\end{equation}
Substituting $p_\theta$ for $p$ in \eqref{eq:score-interp} gives the learned score.

\textbf{Autoregressive normalizing flow (SeqFlow)} \citep{lee2025nfbo} draws a Gaussian latent variable $z\sim\mathcal N(0,\rev{I})$, $z\in\R^{L\times d}$, and maps it through a diffeomorphism $g_\theta:\R^{L\times d}\to\R^{L\times d}$ to a sequence of continuous token embeddings, $x=g_\theta(z)$, which $\tau$ decodes by \eqref{eq:embed-decode}.
Because $g_\theta$ is a bijection and $\tau$ is deterministic, the resulting distribution over designs is exactly the pushforward of the latent Gaussian,
\begin{equation}
P_\theta(w)=\int_{A_w}\mathcal N(z;0,\rev{I})\,dz,\qquad A_w:=(\tau\circ g_\theta)^{-1}(w)\subset\R^{L\times d},
\label{eq:seqflow-pushforward}
\end{equation}
where the cells $\{A_w\}_{w\in\gX}$ partition the latent space.
We refer to \citet{lee2025nfbo} for details of the architecture and training procedure.

\section{Acquisition Function-Guided Generation}
\label{sec:method}

We now present AFGG, which steers a pretrained generative model toward designs with high acquisition values.
We first define the acquisition-tilted target distribution and explain why it is hard to sample it directly in data space (\secref{sec:method-goal}).
We then introduce a diffusion bridge that smooths this target and recovers informative gradients (\secref{sec:method-bridge}).
The resulting guidance term reduces to a difference of posterior means, which we estimate by sampling (\secref{sec:method-estimator}).
Finally, we show how the same construction applies to our two generative backbones, which differ only in how they provide the base score and posterior samples (\secref{sec:method-two}).

\subsection{The Sampling Problem}
\label{sec:method-goal}

As described in \secref{sec:background}, the generative model defines a density $p_\theta(x)$ on the continuous embedding space $\R^{L\times d}$ together with a deterministic decoder $\tau\colon\R^{L\times d}\to\gX$.
Each discrete design sequence $w\in\gX$ corresponds to the region of embedding space that decodes to it,
\begin{equation}
\tau^{-1}(w) \triangleq \{x\in\R^{L\times d} \colon \tau(x)=w\}.
\end{equation}
These regions partition the embedding space, and the probability of a sequence is the mass that $p_\theta$ assigns to its corresponding region.
Hence, the decoder pushes $p_\theta$ forward to a probability mass function $P_\theta$ over the discrete sequence space,
\begin{equation}
\label{eq:tau-pushforward}
P_\theta(w)
=
\int_{\tau^{-1}(w)}p_\theta(x)\,\mathrm{d}x .
\end{equation}

\begin{definition}[Tilted target]
\label{def:p_x0_tild}
Given the induced probability mass function $P_\theta$ over $\gX$ and a non-negative acquisition function $\ell\colon\gX\to\R_+$, we define the \emph{tilted target distribution}
\begin{equation}
\label{eq:tilde_prior}
\Pstar(w)
\triangleq
\frac{1}{Z}P_\theta(w)\ell(w),
\qquad
Z
\triangleq
\sum_{w\in\gX}P_\theta(w)\ell(w).
\end{equation}
\end{definition}

The tilted distribution combines the two components of our approach: $P_\theta$ places mass on sequences supported by the pretrained generative model, while $\ell$ shifts this mass toward sequences favored by the surrogate.
The requirement that $\ell$ be non-negative is mild, as a real-valued acquisition function $\tilde{\ell}$ can be turned into a positive tilt, for example through $\ell(w)=\exp( \tilde{\ell}(w))$.

This product form also has a natural interpretation through entropy-regularized control \citep{uehara2024finetuning}.
Indeed, $\Pstar$ is the solution of
\begin{equation}
\label{eq:variational}
\Pstar
=
\argmax_{P}
\left\{
\E_{P}\big[\log\ell(w)\big]
-
\KL\big(P\,\|\,P_{\theta}\big)
\right\}.
\end{equation}
The first term rewards designs with high acquisition values, while the KL term keeps the resulting distribution close to the pretrained generative model.
Equivalently, entropy-regularized control yields a target of the form $P_\theta(w)\exp(r(w))$, and our tilted distribution follows by taking $r(w)=\log\ell(w)$.

While sampling from the generative model $P_\theta$ is easy, the acquisition tilt changes this distribution such that directly sampling from $\Pstar$ would require handling the combinatorial space $\gX$, facing much the same difficulty as maximizing the acquisition function over $\gX$.
To sample from $\Pstar$, we instead return to the continuous space of the generative model.
Using \eqref{eq:tau-pushforward}, the corresponding tilted density is
\begin{equation}
\label{eq:tilted-embedding}
\pstar(x)
\triangleq
\frac{1}{Z}p_{\theta}(x)\ell(\tau(x)).
\end{equation}
Sampling $x\sim\pstar$ and decoding $w=\tau(x)$ yields exactly $w\sim\Pstar$; we prove this in \appref{proof:lifting}.

The main technical challenge is that $\ell(\tau(x))$ is constant within each cell $\tau^{-1}(w)$, so $\nabla_x\,\ell(\tau(x))=0$ almost everywhere.
LSBO avoids this difficulty by moving the surrogate to latent space, but in doing so inherits the limitations discussed above.
The rest of this section shows how to recover informative gradients while keeping the surrogate in data space.

\subsection{Lifting the Target to a Bridge}
\label{sec:method-bridge}

We recover informative gradients by introducing a family of smoothed versions of $\pstar$, indexed by $t\in[0,T]$, with $t=0$ corresponding to the tilted target and $t=T$ to the noise distribution.
Note that, although this smoothing construction is borrowed from diffusion models \citep{ho2020denoising, song2021score}, the generative model itself need not be a diffusion model; it only needs to support the guided sampling procedure developed below.

\begin{definition}[Bridge marginals]
\label{def:p_t}
Starting from the tilted density $\pstar$ and the original generative density $p_\theta$, we define their \emph{bridge marginals} at noise level $t$ as
\begin{equation}
    \pstar_t(z_t)
    =
    \int \pstar(x)\,q_t(z_t\mid x)\,\mathrm{d}x,
    \qquad
    p_t(z_t)
    =
    \int p_\theta(x)\,q_t(z_t\mid x)\,\mathrm{d}x ,
\end{equation}
\end{definition}
where
\begin{equation}
\label{eq:kernel-x}
q_t(z_t\mid x)
=
\mathcal{N}\big(z_t;\alpha_t x,\sigma_t^2 I\big)
\end{equation}
is a Gaussian noising kernel applied to the continuous variable $x$.

A key observation is that the acquisition tilt is preserved under this smoothing in a convenient form.
Let
\begin{equation}
p(x\mid z_t)
=
\frac{p_\theta(x)q_t(z_t\mid x)}{p_t(z_t)}
\end{equation}
denote the posterior over the clean variable $x$ under the original generative model.

\begin{proposition}[Tilted marginal]
\label{prop:tilted-marginal}
The tilted marginal can be written as
\begin{equation}
\label{eq:tilted-marginal}
\pstar_t(z_t)
\propto
p_t(z_t)\,\ell_t(z_t),
\qquad
\ell_t(z_t)
\triangleq
\E_{p(x\mid z_t)}\big[\ell(\tau(x))\big],
\end{equation}
where $\ell_t$ is the acquisition function averaged over the posterior uncertainty about the clean design sequence.
Moreover, the corresponding tilted posterior satisfies
\begin{equation}
\label{eq:tilted-posterior}
\pstar(x\mid z_t)
\propto
p(x\mid z_t)\,\ell(\tau(x)).
\end{equation}
\end{proposition}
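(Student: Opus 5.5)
The plan is to compute directly from \defref{def:p_t}, substituting the definition of the tilted density \eqref{eq:tilted-embedding}. We start from
\begin{equation*}
\pstar_t(z_t)=\int \pstar(x)\,q_t(z_t\mid x)\,\mathrm{d}x=\frac{1}{Z}\int p_\theta(x)\,\ell(\tau(x))\,q_t(z_t\mid x)\,\mathrm{d}x .
\end{equation*}
Then, at any $z_t$ with $p_t(z_t)>0$, we multiply and divide by $p_t(z_t)$ and recognise $p_\theta(x)q_t(z_t\mid x)/p_t(z_t)=p(x\mid z_t)$. This gives
\begin{equation*}
\pstar_t(z_t)=\frac{1}{Z}\,p_t(z_t)\int p(x\mid z_t)\,\ell(\tau(x))\,\mathrm{d}x=\frac{1}{Z}\,p_t(z_t)\,\ell_t(z_t).
\end{equation*}
This is the first claim, with the explicit constant $1/Z$ and no other normalisation. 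Because the Gaussian kernel $q_t$ is strictly positive, $p_t(z_t)>0$ everywhere, so the division is always legitimate.

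For the tilted posterior, we define $\pstar(x\mid z_t)=\pstar(x)q_t(z_t\mid x)/\pstar_t(z_t)$ by Bayes' rule under the tilted joint, which shares the same noising kernel. Substituting both $\pstar(x)$ and the expression for $\pstar_t(z_t)$ just obtained, the factor $1/Z$ cancels and we get
\begin{equation*}
\pstar(x\mid z_t)=\frac{p_\theta(x)q_t(z_t\mid x)\,\ell(\tau(x))}{p_t(z_t)\,\ell_t(z_t)}=\frac{p(x\mid z_t)\,\ell(\tau(x))}{\ell_t(z_t)} .
\end{equation*}
Since $\ell_t(z_t)$ does not depend on $x$, this proves the proportionality. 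It also shows that $\ell_t(z_t)$ is exactly the normalising constant of the reweighted posterior.

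The only genuinely delicate points are measure-theoretic, and they are minor. First, all integrals must be finite. Since $\ell\ge 0$, Tonelli applies, and finiteness of $Z$ is assumed in \defref{def:p_x0_tild}. For unbounded tilts such as EI, $\ell_t(z_t)$ is finite for almost every $z_t$, because $\int p_t(z_t)\,\ell_t(z_t)\,\mathrm{d}z_t=Z<\infty$. Second, the tilted posterior is well defined only where $\ell_t(z_t)>0$, which holds wherever $\pstar_t(z_t)>0$; these are the only points where conditioning makes sense in any case. I expect no real obstacle beyond stating these conventions.
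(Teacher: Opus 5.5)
Your proof is correct and matches the paper's argument: expand $\pstar_t$ from the definitions, use Bayes' rule $p_\theta(x)\,q_t(z_t\mid x)=p(x\mid z_t)\,p_t(z_t)$ to factor out $p_t(z_t)$, then apply Bayes' rule to the tilted pair and absorb the $x$-independent $\ell_t(z_t)$ into the normalizer. Keeping the explicit constant $1/Z$ and adding the positivity and finiteness remarks are harmless refinements; in fact, since $\gX=\vocab^L$ is finite, $\ell$ is bounded, so $\ell_t$ is finite everywhere and not just almost everywhere.
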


\begin{proof}
The proof is in \appref{proof:tilted-marginal}.
\end{proof}

This factorization makes guidance convenient.
To sample from $\pstar$ by reversing the bridge, we need the score of the tilted marginal,
\begin{equation}
\label{eq:score-split}
\nabla_{z_t}\log\pstar_t(z_t)
=
\nabla_{z_t}\log p_t(z_t)
+
\nabla_{z_t}\log\ell_t(z_t).
\end{equation}
The first term is the score of the pretrained generative model at noise level $t$, while the second steers it toward regions favored by the acquisition function.
This has the same form as guidance in diffusion models \citep{dhariwal2021diffusion}.

Importantly, $\ell_t$ is no longer the piecewise-constant function $\ell\circ\tau$.
Instead, it averages $\ell(\tau(x))$ over the posterior $p(x\mid z_t)$, and can therefore vary smoothly with the continuous variable $z_t$.
Hence, $\nabla_{z_t}\log\ell_t(z_t)$ can provide an informative guidance signal even though $\nabla_x\ell(\tau(x))=0$ almost everywhere.

Evaluating \eqref{eq:score-split} therefore requires two ingredients: (i) the untilted score $\nabla_{z_t}\log p_t(z_t)$ and (ii) samples from the posterior $p(x\mid z_t)$.
The next section shows how these ingredients yield the guidance term, while \secref{sec:method-two} explains how they are obtained for two generative backbones.

\subsection{Guidance \rev{Is} a Tilted Mean Difference}
\label{sec:method-estimator}

The central object in \eqref{eq:score-split} is the guidance term $\nabla_{z_t}\log\ell_t(z_t)$.
For the Gaussian kernel \eqref{eq:kernel-x}, this term has a particularly simple form: it is proportional to the difference between the ordinary posterior mean and its acquisition-weighted counterpart.

\begin{definition}[Tilted estimate]
\label{def:tilted-estimate}
With $\pstar(x\mid z_t)$ as in \eqref{eq:tilted-posterior}, define
\begin{equation}
\ztilt
\triangleq
\E_{\pstar(x\mid z_t)}\big[x\big]
=
\frac{\E_{p(x\mid z_t)}\big[\ell(\tau(x))\,x\big]}
     {\E_{p(x\mid z_t)}\big[\ell(\tau(x))\big]},
\qquad
\zhat
\triangleq
\E_{p(x\mid z_t)}\big[x\big].
\end{equation}
\end{definition}

Thus, $\zhat$ is the posterior mean under the pretrained generative model, whereas $\ztilt$ gives more weight to posterior samples whose decoded designs have high acquisition value.

\begin{proposition}[Guidance score as tilted mean difference]
\label{prop:guidance-mean-swap}
Let the kernel of \defref{def:p_t} be the Gaussian kernel~(\plaineqref{eq:kernel-x}), with $\alpha_t,\sigma_t>0$.
Then
\begin{equation}
\label{eq:mean-swap}
\nabla_{z_t}\log\ell_t(z_t)
=
\frac{\alpha_t\big(\ztilt-\zhat\big)}{\sigma_t^2},
\qquad
\nabla_{z_t}\log p_t(z_t)
=
\frac{\alpha_t\,\zhat-z_t}{\sigma_t^2}.
\end{equation}
Substituting both expressions into \eqref{eq:score-split} gives
\begin{equation}
\label{eq:guided-score}
\nabla_{z_t}\log\pstar_t(z_t)
=
\frac{1}{\sigma_t^2}
\Big[
\big(\alpha_t\zhat-z_t\big)
+
\alpha_t\big(\ztilt-\zhat\big)
\Big]
=
\frac{\alpha_t\ztilt-z_t}{\sigma_t^2}.
\end{equation}
\end{proposition}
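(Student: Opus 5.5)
The plan is to differentiate under the integral sign, using the identity for the gradient of a Gaussian kernel, $\nabla_{z_t} q_t(z_t\mid x) = q_t(z_t\mid x)\,(\alpha_t x - z_t)/\sigma_t^2$. I would establish the untilted score first (a Tweedie-type identity) and then the guidance term by the same computation applied to a tilted integrand.

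\textbf{Untilted score.} Write $p_t(z_t)=\int p_\theta(x)q_t(z_t\mid x)\,\mathrm{d}x$. Differentiating inside the integral gives
\begin{equation*}
\nabla_{z_t} p_t(z_t) = \int p_\theta(x)q_t(z_t\mid x)\frac{\alpha_t x - z_t}{\sigma_t^2}\,\mathrm{d}x .
\end{equation*}
Dividing by $p_t(z_t)$ turns the integrand into the posterior $p(x\mid z_t)$. This yields $(\alpha_t\zhat - z_t)/\sigma_t^2$.

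\textbf{Guidance term.} By \propref{prop:tilted-marginal}, the proof of which gives the constant explicitly, $p_t(z_t)\ell_t(z_t)=\int p_\theta(x)\ell(\tau(x))q_t(z_t\mid x)\,\mathrm{d}x = Z\,\pstar_t(z_t)$. Applying the same differentiation to this integral and dividing by $p_t\ell_t$ gives
\begin{equation*}
\nabla\log(p_t\ell_t) = \frac{\alpha_t\,\E_{\pstar(x\mid z_t)}[x]-z_t}{\sigma_t^2},
\end{equation*}
using the tilted posterior \eqref{eq:tilted-posterior}. This equals $(\alpha_t\ztilt - z_t)/\sigma_t^2$.

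The crucial point is that $\ell\circ\tau$ never gets differentiated: it only multiplies the integrand, and all $z_t$-dependence sits in the Gaussian kernel. Subtracting the untilted score gives $\nabla\log\ell_t = \alpha_t(\ztilt-\zhat)/\sigma_t^2$. Equivalently, one can differentiate $\ell_t=\int \ell(\tau(x))\,p_\theta(x)q_t/p_t\,\mathrm{d}x$ by the quotient rule, which gives the covariance-like expression $\alpha_t\big(\E[\ell x]-\E[\ell]\,\zhat\big)/(\sigma_t^2\,\ell_t)$. Adding the two scores then gives \eqref{eq:guided-score} by direct cancellation of the $\alpha_t\zhat$ terms.

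\textbf{Main obstacle.} The main difficulty is justifying the interchange of gradient and integral, and ensuring $\ell_t>0$ so the logarithm is defined.
\begin{itemize}
\item For the interchange, I would use dominated convergence. On a neighbourhood of $z_t$, the quantity $|\nabla q_t|$ is bounded by $C(1+\|x\|)$ times a Gaussian in $x$ uniformly.
\item This requires that $p_\theta$ and $p_\theta\,\ell\circ\tau$ have finite first moments. This holds since $\ell\circ\tau$ takes finitely many values, because $\gX$ is finite, so $\ell\circ\tau$ is bounded.
\item For positivity, it suffices that $Z>0$, i.e.\ $\ell$ is not $P_\theta$-a.s.\ zero. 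Since the Gaussian kernel is strictly positive, $\ell_t(z_t)>0$ for every $z_t$.
\end{itemize}
I would state these mild assumptions explicitly at the start of the proof.
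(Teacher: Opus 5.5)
Your proposal is correct and is essentially the paper's proof. The paper likewise writes $\ell_t = I/p_t$ with $I(z_t)=\int \ell(\tau(x))\,q_t(z_t\mid x)\,p_\theta(x)\,\mathrm{d}x$ (your $Z\,\pstar_t$), derives $\nabla_{z_t}\log I = (\alpha_t\ztilt - z_t)/\sigma_t^2$ and the Tweedie score from $\nabla_{z_t}q_t = q_t\,(\alpha_t x - z_t)/\sigma_t^2$, and subtracts so that the $z_t$ terms cancel.

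Your regularity discussion goes beyond the paper, which omits it. One small correction there: boundedness of $\ell\circ\tau$ does not by itself give $p_\theta$ a finite first moment. None is needed, though, because $\lVert\alpha_t x - z_t\rVert\, q_t(z_t\mid x)$ is bounded uniformly in $(x,z_t)$, so $C\,p_\theta(x)$ and $C\,p_\theta(x)\,\ell(\tau(x))$ already serve as integrable dominating functions.
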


\begin{proof}
The proof is in \appref{proof:guidance-mean-swap}.
\end{proof}

The guided score therefore has exactly the same form as the original score, with the posterior mean $\zhat$ being replaced by its acquisition-weighted counterpart $\ztilt$.
This \emph{tilted mean difference} is a key simplification, as guidance does not require differentiating the acquisition function, but only estimating $\ztilt$.

In general, $\ztilt$ is not available in closed form because $\ell(\tau(x))$ need not \rev{factorize} across sequence positions, so we estimate it from samples $x^{(n)}\sim p(\cdot\mid z_t)$ by self-\rev{normalized} importance sampling.
For $N$ samples, let
\begin{equation}
\ell_n=\ell\big(\tau(x^{(n)})\big),
\qquad
h_n=\frac{\ell_n}{\sum_{m=1}^{N}\ell_m},
\qquad
\ztilt\approx\sum_{n=1}^{N}h_nx^{(n)}.
\end{equation}
This estimator is consistent (\appref{proof:snis}); guidance strength can further be controlled by a scale $\lambda$ \rev{(see} \eqref{eq:linear_tilt}\rev{)} or a temperature $\gamma$ on the importance weights \rev{(see} \eqref{eq:power_tilt}\rev{)}.

\subsection{Two \rev{Backbones}, \rev{One Bridge}}
\label{sec:method-two}
\label{sec:method-kernel}

\paragraph{CDCD.}
CDCD's forward process in \eqref{eq:cdcd-forward} is the kernel in \eqref{eq:kernel-x} with the variance-exploding schedule $\alpha_t\equiv1$, $\sigma_t=t$ \citep{karras2022elucidating}, so $\pstar_t=\pstar*\mathcal{N}(0,t^2I)$ and $z_t$ lives in the embedding space of \eqref{eq:embed-decode}.
The network provides both required quantities: the learned score \eqref{eq:score-interp} and the per-position posterior marginals \eqref{eq:cdcd-posterior}, from which posterior draws are taken under a mean-field approximation (\appref{app:sec-cdcd}).
At sufficiently large $T$, the added noise dominates both $p_\theta$ and the tilt,
\begin{equation}
  \pstar_T \approx p_T \approx \mathcal{N}(0,T^2 I),
  \label{eq:terminal-ve}
\end{equation}
so reversing the bridge with the PF-ODE or SDE driven by \eqref{eq:score-split} yields $x\sim\pstar$, which decodes to $w=\tau(x)\sim\Pstar$.

\paragraph{SeqFlow.}
SeqFlow has no built-in noising process.
Although $p_\theta(x)$ is tractable, applying \eqref{eq:kernel-x} in embedding space makes both the noised marginal $p_t$ and the posterior $p(x\mid z_t)$ intractable.
We therefore move the bridge to the flow latent $z=g_\theta^{-1}(x)$, whose untilted law is exactly $\mathcal{N}(0,I)$ by \eqref{eq:seqflow-pushforward}.
Since $g_\theta$ is a diffeomorphism, transporting the tilted density \eqref{eq:tilted-embedding} to the latent carries the same Jacobian factor $J$ as transporting $p_\theta$, so the factor cancels and only the tilt survives (\appref{proof:det-pushforward}),
\begin{equation}
  \pi^\star(z)
  \triangleq
  \pstar\big(g_\theta(z)\big)\,\big\lvert\det J_{g_\theta}(z)\big\rvert
  =
  \frac{1}{Z}\,\mathcal{N}(z;0,I)\,
  \ell\big(\tau(g_\theta(z))\big),
  \label{eq:push-forward}
\end{equation}
with $\tau(g_\theta(z))\sim\Pstar$ for $z\sim\pi^\star$.
As in \secref{sec:method-goal}, the tilt remains piecewise constant on the cells $A_w$ of \eqref{eq:seqflow-pushforward}, so we smooth it using
\begin{equation}
  q_t(z_t\mid z)
  =
  \mathcal{N}\big(z_t;\alpha_t z,\sigma_t^2I\big),
  \label{eq:kernel-z}
\end{equation}
with the variance-preserving schedule $\sigma_t=\sqrt{1-\alpha_t^2}$, $\alpha_t\in(0,1]$.
Gaussian conjugacy then gives both required quantities in closed form,
\begin{equation}
  \text{(i)}\ \nabla_{z_t}\log p_t(z_t)=-z_t,
  \qquad
  \text{(ii)}\ p(z\mid z_t)
  =
  \mathcal{N}\Big(z;\alpha_tz_t,(1-\alpha_t^2)I\Big).
  \label{eq:latent-posterior}
\end{equation}
The posterior is exact and network-free; the flow enters only through the tilt, evaluated once per draw and never differentiated.
Thus, \secref{sec:method-estimator} applies with $x$ replaced by $z$, while CDCD operates in embedding space.
\tabref{tab:easy-hard} summarizes the correspondence between the two backbones.

\subsection{Trust \rev{Region}}
\label{sec:trust-region}

Restricting search to a \rev{neighborhood} of the incumbent, the best design observed so far, is a common strategy for improving the sample efficiency of high-dimensional BO \citep{eriksson2019turbo}.
In AFGG, we implement this locality through the initialization of the guided sampling trajectories.
Both backbones start sampling from a Gaussian, $z_T\sim\mathcal{N}(0,\sigma^2 I)$, with $\sigma=T$ for CDCD \rev{(see} \eqref{eq:terminal-ve}\rev{)} and $\sigma=1$ for SeqFlow.
Instead, we wish to start closer to an anchor $z^\star$ that represents the incumbent in the same space.
However, sample quality deteriorates if the model must process an input that is unlikely under the $D$-dimensional Gaussian it expects, which concentrates its mass on a shell of radius $\sigma\sqrt{D}$ around the origin \citep[Thm.~2.9]{Blum_Hopcroft_Kannan_2020}.
We therefore sample
\begin{equation}
  z_T \sim \mathcal{N}\big(z_T;\, \rho z^\star,\, (1-\rho^2)\,\sigma^2 I\big),
  \label{eq:trust-region}
\end{equation}
where $\rho\in[0,1]$ controls how tightly the trust region concentrates around $z^\star$: $\rho=0$ recovers the unconditional initialization and $\rho=1$ starts every trajectory at $z^\star$.
Since $\E[\lVert z_T\rVert^2\mid z^\star]=\rho^2\lVert z^\star\rVert^2+(1-\rho^2)\,\sigma^2 D$, an anchor on the shell, $\lVert z^\star\rVert^2=\sigma^2 D$, gives $\E[\lVert z_T\rVert^2\mid z^\star]=\sigma^2 D$ for every $\rho$, so the initializations stay near the shell.

The complete AFGG optimization loop is described in \appref{sec:afgg_algo}.
\section{Experiments}
\label{sec:experiments}

We evaluate AFGG on the 25 molecular optimization objectives of \citet{gao2022sample} and \citet{gonzalezduque2024survey}, following the small-data protocol of \citet{moss2025cowboys}.
We consider both a \emph{sequential setting}, with 10 initial observations followed by 300 oracle evaluations, one per BO iteration, and a \emph{batch setting}, with 100 initial observations followed by 500 oracle evaluations in batches of five.
We instantiate AFGG with two different generative backbones, SeqFlow \citep{lee2025nfbo} and CDCD \citep{dieleman2022continuous}, as described in \secref{sec:method-two}, allowing us to test the same acquisition-guided construction with two substantially different generative models.
For SeqFlow, we use the released GuacaMol-trained checkpoint; since no pretrained CDCD checkpoint was available, we train one on the same dataset (\appref{app:sec-cdcd_training}).

We compare against five strong recent methods that are particularly relevant to our setting: CoBO \citep{lee2023cobo}, LOL-BO \citep{maus2022local}, COWBOYS \citep{moss2025cowboys}, InvBO \citep{chu2024inversion}, and NFBO \citep{lee2025nfbo}.
These include both the closest prior approach, COWBOYS, and NFBO, which uses the same SeqFlow generative backbone as one of our AFGG variants.
For the sequential setting, \appref{sec:additional_results} (\tabref{tab:afgg-all25-vs-benchmarks_full}) broadens the comparison to 11 baselines, additionally including CMA-ES, differential evolution (DE), genetic algorithms (GA), hill climbing (HC), RLBO\rev{~\citep{kirschner2019linebo}}, and TuRBO\rev{~\citep{eriksson2019turbo}}.
Unless stated otherwise, AFGG uses EI with the temperature tilt of \eqref{eq:power_tilt}, which performs best in our ablations (\rev{\appref{sec:hyperparameter-tuning}}).

\begin{figure}[t]
    \centering
    \includegraphics[width=\textwidth]{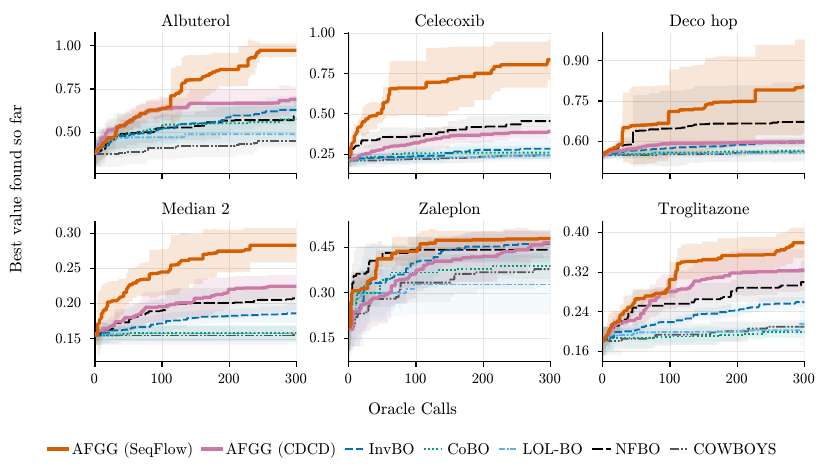}
    \caption{\textbf{Sequential setting.}
Optimization trajectories on six representative objectives, with 10 initial observations followed by 300 oracle evaluations, one per BO iteration.
Curves show the median over 5 seeds.}
    \label{fig:6_objfun_seq}
\end{figure}

\begin{table}[t]
\centering
\small
\resizebox{\textwidth}{!}{%
\begin{tabular}{lrrrrrrr}
\toprule
 & \multicolumn{5}{c}{baselines} & \multicolumn{2}{c}{AFGG} \\
\cmidrule(lr){2-6}\cmidrule(lr){7-8}
Objective & CoBO & InvBO & LOL-BO & NFBO & COWBOYS & CDCD & SeqFlow \\
\midrule
\texttt{albu\_sim} & 0.593 $\pm$ 0.111 & 0.629 $\pm$ 0.079 & 0.509 $\pm$ 0.065 & 0.593 $\pm$ 0.119 & 0.448 $\pm$ 0.037 & \textbf{0.691 $\pm$ 0.073} & \cellcolor[HTML]{D6F0D6}\textbf{0.975 $\pm$ 0.038} \\
\texttt{amlo\_mpo} & 0.523 $\pm$ 0.012 & 0.538 $\pm$ 0.050 & 0.516 $\pm$ 0.022 & 0.563 $\pm$ 0.095 & 0.472 $\pm$ 0.038 & 0.546 $\pm$ 0.039 & \textbf{0.565 $\pm$ 0.045} \\
\texttt{cele\_red} & 0.262 $\pm$ 0.041 & 0.286 $\pm$ 0.068 & 0.243 $\pm$ 0.035 & 0.455 $\pm$ 0.035 & 0.243 $\pm$ 0.020 & 0.392 $\pm$ 0.024 & \cellcolor[HTML]{D6F0D6}\textbf{0.836 $\pm$ 0.125} \\
\texttt{deco\_hop} & 0.565 $\pm$ 0.008 & 0.597 $\pm$ 0.019 & 0.560 $\pm$ 0.009 & 0.673 $\pm$ 0.147 & 0.559 $\pm$ 0.008 & 0.599 $\pm$ 0.010 & \textbf{0.803 $\pm$ 0.175} \\
\texttt{drd2} & 0.510 $\pm$ 0.370 & 0.963 $\pm$ 0.037 & 0.626 $\pm$ 0.184 & 0.971 $\pm$ 0.066 & 0.796 $\pm$ 0.243 & 0.730 $\pm$ 0.226 & \cellcolor[HTML]{D6F0D6}\textbf{1.000 $\pm$ 0.000} \\
\texttt{fexo\_mpo} & 0.710 $\pm$ 0.034 & 0.734 $\pm$ 0.027 & 0.716 $\pm$ 0.021 & 0.723 $\pm$ 0.032 & 0.683 $\pm$ 0.031 & \cellcolor[HTML]{D6F0D6}\textbf{0.763 $\pm$ 0.017} & \textbf{0.741 $\pm$ 0.016} \\
\texttt{gsk3b} & 0.362 $\pm$ 0.069 & 0.500 $\pm$ 0.042 & 0.336 $\pm$ 0.103 & 0.656 $\pm$ 0.174 & 0.418 $\pm$ 0.093 & 0.518 $\pm$ 0.088 & \cellcolor[HTML]{D6F0D6}\textbf{0.950 $\pm$ 0.037} \\
\texttt{iso\_c7} & 0.875 $\pm$ 0.145 & \cellcolor[HTML]{D6F0D6}\textbf{1.000 $\pm$ 0.000} & 0.760 $\pm$ 0.036 & 0.860 $\pm$ 0.092 & 0.919 $\pm$ 0.046 & 0.957 $\pm$ 0.049 & 0.981 $\pm$ 0.043 \\
\texttt{iso\_c9} & 0.577 $\pm$ 0.050 & 0.699 $\pm$ 0.147 & 0.630 $\pm$ 0.082 & 0.758 $\pm$ 0.024 & \cellcolor[HTML]{D6F0D6}\textbf{0.834 $\pm$ 0.020} & 0.722 $\pm$ 0.079 & 0.809 $\pm$ 0.081 \\
\texttt{jnk3} & 0.148 $\pm$ 0.031 & 0.206 $\pm$ 0.063 & 0.144 $\pm$ 0.035 & 0.412 $\pm$ 0.095 & 0.176 $\pm$ 0.032 & 0.216 $\pm$ 0.033 & \cellcolor[HTML]{D6F0D6}\textbf{0.778 $\pm$ 0.038} \\
\texttt{med\_1} & 0.178 $\pm$ 0.025 & 0.253 $\pm$ 0.037 & 0.175 $\pm$ 0.023 & 0.203 $\pm$ 0.024 & 0.192 $\pm$ 0.024 & \cellcolor[HTML]{D6F0D6}\textbf{0.346 $\pm$ 0.031} & \cellcolor[HTML]{D6F0D6}\textbf{0.352 $\pm$ 0.045} \\
\texttt{med\_2} & 0.158 $\pm$ 0.011 & 0.186 $\pm$ 0.014 & 0.158 $\pm$ 0.011 & 0.207 $\pm$ 0.018 & 0.156 $\pm$ 0.012 & \cellcolor[HTML]{D6F0D6}\textbf{0.225 $\pm$ 0.016} & \cellcolor[HTML]{D6F0D6}\textbf{0.283 $\pm$ 0.024} \\
\texttt{mest\_sim} & 0.421 $\pm$ 0.064 & 0.531 $\pm$ 0.054 & 0.384 $\pm$ 0.059 & 0.555 $\pm$ 0.125 & 0.404 $\pm$ 0.023 & \textbf{0.593 $\pm$ 0.051} & \textbf{0.658 $\pm$ 0.114} \\
\texttt{osim\_mpo} & 0.761 $\pm$ 0.013 & 0.782 $\pm$ 0.024 & 0.767 $\pm$ 0.015 & \textbf{0.803 $\pm$ 0.019} & 0.749 $\pm$ 0.023 & 0.778 $\pm$ 0.023 & 0.789 $\pm$ 0.034 \\
\texttt{peri\_mpo} & 0.487 $\pm$ 0.049 & 0.447 $\pm$ 0.054 & 0.416 $\pm$ 0.079 & 0.434 $\pm$ 0.015 & 0.344 $\pm$ 0.034 & 0.432 $\pm$ 0.031 & \textbf{0.502 $\pm$ 0.017} \\
\texttt{rano\_mpo} & 0.688 $\pm$ 0.027 & 0.742 $\pm$ 0.033 & 0.684 $\pm$ 0.042 & 0.696 $\pm$ 0.053 & 0.708 $\pm$ 0.040 & \textbf{0.757 $\pm$ 0.034} & \cellcolor[HTML]{D6F0D6}\textbf{0.785 $\pm$ 0.007} \\
\texttt{logp} & \cellcolor[HTML]{D6F0D6}\textbf{65.866 $\pm$ 10.720} & \textbf{49.799 $\pm$ 19.126} & 34.741 $\pm$ 5.318 & 25.055 $\pm$ 8.705 & 15.443 $\pm$ 3.623 & 22.684 $\pm$ 2.667 & 35.734 $\pm$ 10.360 \\
\texttt{qed} & 0.928 $\pm$ 0.004 & 0.942 $\pm$ 0.006 & 0.934 $\pm$ 0.003 & 0.943 $\pm$ 0.008 & 0.898 $\pm$ 0.034 & 0.940 $\pm$ 0.006 & \cellcolor[HTML]{D6F0D6}\textbf{0.945 $\pm$ 0.002} \\
\texttt{sa} & \cellcolor[HTML]{D6F0D6}\textbf{9.001 $\pm$ 0.347} & 8.393 $\pm$ 0.543 & 8.214 $\pm$ 0.711 & 7.538 $\pm$ 0.452 & 8.020 $\pm$ 0.363 & 8.349 $\pm$ 0.606 & 8.560 $\pm$ 0.220 \\
\texttt{scaf\_hop} & 0.441 $\pm$ 0.029 & 0.501 $\pm$ 0.030 & 0.430 $\pm$ 0.020 & 0.510 $\pm$ 0.062 & 0.442 $\pm$ 0.027 & 0.500 $\pm$ 0.028 & \cellcolor[HTML]{D6F0D6}\textbf{0.583 $\pm$ 0.070} \\
\texttt{sita\_mpo} & 0.255 $\pm$ 0.096 & \cellcolor[HTML]{D6F0D6}\textbf{0.330 $\pm$ 0.032} & 0.228 $\pm$ 0.087 & 0.290 $\pm$ 0.030 & \textbf{0.352 $\pm$ 0.090} & 0.108 $\pm$ 0.092 & 0.295 $\pm$ 0.056 \\
\texttt{thio\_red} & 0.270 $\pm$ 0.025 & 0.308 $\pm$ 0.017 & 0.255 $\pm$ 0.011 & 0.376 $\pm$ 0.053 & 0.261 $\pm$ 0.010 & \textbf{0.395 $\pm$ 0.051} & \cellcolor[HTML]{D6F0D6}\textbf{0.475 $\pm$ 0.060} \\
\texttt{trog\_red} & 0.205 $\pm$ 0.004 & 0.260 $\pm$ 0.010 & 0.218 $\pm$ 0.044 & 0.300 $\pm$ 0.044 & 0.210 $\pm$ 0.014 & \textbf{0.324 $\pm$ 0.043} & \cellcolor[HTML]{D6F0D6}\textbf{0.380 $\pm$ 0.031} \\
\texttt{vals\_sma} & 0.000 $\pm$ 0.000 & 0.000 $\pm$ 0.000 & 0.000 $\pm$ 0.000 & \textbf{0.015 $\pm$ 0.034} & 0.000 $\pm$ 0.000 & 0.000 $\pm$ 0.000 & 0.000 $\pm$ 0.000 \\
\texttt{zale\_mpo} & 0.404 $\pm$ 0.055 & 0.461 $\pm$ 0.047 & 0.348 $\pm$ 0.065 & 0.442 $\pm$ 0.042 & 0.378 $\pm$ 0.044 & \textbf{0.466 $\pm$ 0.037} & \textbf{0.480 $\pm$ 0.026} \\
\bottomrule
\end{tabular}}
\caption{\textbf{Sequential benchmark (partial results).} Final best score on all 25 PMO objectives after 300 oracle calls (10 initial points, 1 point per BO iteration), mean $\pm$ sample standard deviation over 5 seeds. For readability the table shows five of the eleven baselines; \tabref{tab:afgg-all25-vs-benchmarks_full} gives the full comparison. AFGG is shown in two instantiations: \textbf{CDCD} and \textbf{SeqFlow}. Every cell is judged against the opposing group. \textbf{Bold}: the cell's mean exceeds every mean in the opposing group. \colorbox[HTML]{D6F0D6}{Green, shaded}: lead exceeds the cell's own sample standard deviation; bold without shading denotes a smaller lead.
Objective abbreviations are defined in \tabref{tab:obj-abbrev}.}
\label{tab:afgg-all25-vs-benchmarks_partial}
\end{table}

\begin{figure}[t]
    \centering
    \includegraphics[width=\textwidth]{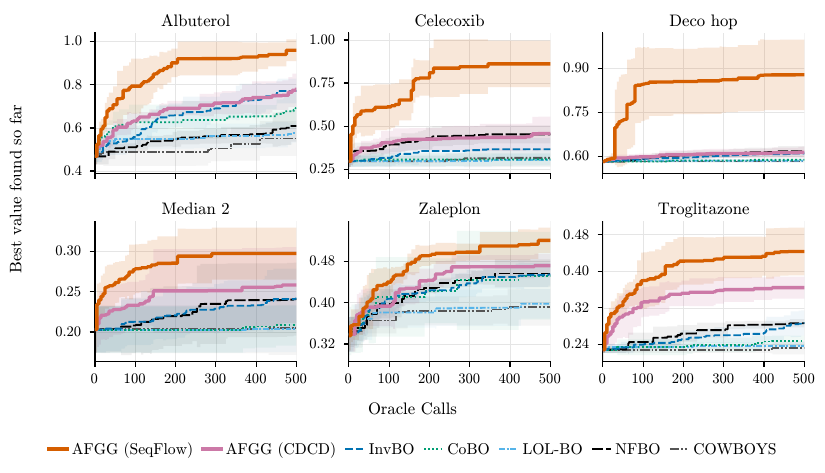}
    \caption{\textbf{Batch setting, uncoupled generation.}
Optimization trajectories on the same six representative objectives, with 100 initial observations followed by 500 oracle evaluations in batches of five.
Curves show the median over 5 seeds.}
    \label{fig:6_objfun_batch}
\end{figure}

\rev{\figref{fig:6_objfun_seq}} shows the optimization trajectories on six representative objectives in the sequential setting.
AFGG with SeqFlow makes particularly strong gains on several of these tasks, while the CDCD variant remains broadly competitive with the leading BO baselines.
To verify that this pattern is not specific to the selected objectives, \tabref{tab:afgg-all25-vs-benchmarks_partial} reports the final performance on all 25 objectives against the five closest BO baselines.

\rev{\figref{fig:6_objfun_batch}} repeats the comparison in the batch setting on the same six representative objectives.
The advantage of the SeqFlow variant persists when $q=5$ candidates are selected per BO round.
\rev{\appref{sec:additional_results}} reports results on all 25 objectives for both uncoupled and coupled batch constructions.

Across both settings, the two AFGG backbones show a clear and consistent pattern.
AFGG with CDCD performs on par with the strongest BO baselines, whereas AFGG with SeqFlow achieves the best result on most objectives and remains competitive on the others.
The same ordering across the sequential and batch settings suggests that the stronger generative backbone translates into substantially stronger optimization performance.
\textbf{Taken together, these results demonstrate state-of-the-art performance for AFGG with SeqFlow across this benchmark suite: it achieves the best result on most objectives and remains close to the strongest baseline on the others.}

Additional analyses in \appref{sec:generative-prior} isolate the contribution of the generative backbone and relate optimization performance to the quality of its unconditional samples.
\section{Related \rev{Work}}
\label{sec:related}

AFGG is related to \emph{inference-time alignment} of pretrained generative models, which likewise samples from a distribution obtained by tilting a generative model toward a reward \citep{krishnamoorthyddom, li2025svdd, uehara2024finetuning, uehara2025tutorial}.
The key difference is that inference-time alignment typically uses a fixed reward, whereas in Bayesian optimization the acquisition function changes after every new observation.
AFGG therefore applies the same general idea of guided generation in a setting where the tilt is continually updated by the surrogate.

Another line of related work is \emph{active generation}\rev{,} where the generative model is adapted repeatedly as new data arrive, for example by conditioning or variationally fitting it to an acquisition-weighted target \citep{brookes2019conditioning, steinberg2025variational, yun2025dibo, oliveira2026genbo}; the same approach can also be applied to infilling problems, where the generated sequence has fixed length \citep{stanton2022lambo, gruver2023protein}.
AFGG takes a different route: the pretrained generator remains frozen, while only the surrogate and acquisition function are updated during BO.
This allows strong pretrained generative models to be reused throughout optimization without repeated retraining or fine-tuning.

Most generative BO methods place the surrogate in the learned latent space \citep{tripp2020sample, maus2022local, lee2023cobo, chu2024inversion, lee2025nfbo}, which makes inductive biases defined naturally on the original designs harder to use directly.
To the best of our knowledge, COWBOYS \citep{moss2025cowboys} is the only prior BO method that similarly combines a frozen generative model with a surrogate defined in data space.
It is therefore the closest precursor to AFGG, but its construction is tied to a VAE and to probability of improvement.
In contrast, AFGG only requires a generative model that supports guided sampling and a non-negative acquisition function that can be evaluated pointwise.
The acquisition function need neither be differentiable nor define a probability, allowing, for example, EI to be used directly.

\section{Conclusion}
\label{sec:conclusion}

We introduced AFGG, which replaces acquisition-function maximization with sampling from an acquisition-tilted generative distribution.
A diffusion bridge makes this target amenable to guided sampling while leaving the surrogate and acquisition function in data space.
The generator remains frozen throughout BO, and the acquisition function only needs to be non-negative and evaluable pointwise.
On the molecular benchmarks studied here, AFGG with CDCD is competitive with strong recent BO methods, while AFGG with SeqFlow achieves the best result on most objectives and remains close to the strongest baselines on the rest.
The gap between the two variants further shows that the quality of the generative backbone matters directly for optimization performance.
This makes stronger pretrained generators a natural next target for AFGG.
In particular, protein foundation models \citep{af3, boltz2, rfdiffusion} could potentially serve as frozen backbones.
Applying AFGG to such models will mainly require efficient approximations to the posterior $p(x \mid z_t)$ used for guidance.

\section{AI \rev{Use Statement}}
Large language models were used as an assistive tool during the preparation of this work, under author supervision, in three roles:
\begin{itemize}
    \item \textbf{Writing.} Copy-editing of author-written text for clarity and concision, and suggestions for \rev{reorganizing} existing material. No section was drafted from scratch by a model; all technical claims, derivations and experimental conclusions are the authors'.
    \item \textbf{Related work.} Surfacing candidate references and drafting initial summaries of them. Every cited work was subsequently read by the authors, and all attributions and comparisons were verified against the primary sources.
    \item \textbf{Code.} Inspecting the released implementations of the baselines. \rev{Standardizing} the implementation and evaluation protocol across methods required modifications to the original code, and an AI assistant was used to help locate and implement them. Every deviation from an original implementation is documented, with the corresponding line numbers.
\end{itemize}
The authors take full responsibility for the entire content of this paper, including any text, code or references produced with AI assistance.

\section{Reproducibility \rev{Statement}}
An \rev{anonymized} implementation is available at \url{https://anonymous.4open.science/r/afgg}. It covers the guided samplers of \algref{alg:cdcd-guided} and \algref{alg:seqflow-guided}, the BO loop, and our modified baseline code, together with the configuration files and random seeds needed to reproduce every table and figure reported here. The guidance construction is stated in full in \secref{sec:method} and proved in \appref{app:proofs}. Hyperparameters and training details for the CDCD generator are given in \tabref{tab:hyp-config-cdcd} and \appref{app:sec-cdcd_training}; the SeqFlow backbone is the pretrained model of \citet{lee2025nfbo}, used without modification. 




\section{Acknowledgments}
 
This work was supported by the Austrian Science Fund (FWF) through the doc.funds ``BiotechPredict'' (10.55776/DOC2686925) and the Cluster of Excellence ``Bilateral AI'' (10.55776/COE12).
Additional support was provided by the TU Graz Leadprojekt ``DigiBioTech'' and Lancaster University’s Mathematics for AI in
Real-world Systems E3 grant.


\bibliography{iclr2027_conference}
\bibliographystyle{iclr2027_conference}


\appendix
\section{Proofs}
\label{app:proofs}

\subsection{Tilting \rev{Commutes} with \rev{Decoding}}
\label{proof:lifting}

\begin{proposition}[Lifting]
\label{prop:lifting}
Let $\pstar$ be the tilted density \eqref{eq:tilted-embedding} and $\Pstar$ the tilted target of
\defref{def:p_x0_tild}. Then a draw $x\sim\pstar$ returns $\tau(x)\sim\Pstar$.
\end{proposition}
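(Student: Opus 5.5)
The plan is to compute the law of $\tau(x)$ directly when $x\sim\pstar$, by integrating the tilted density $\pstar$ of \eqref{eq:tilted-embedding} over each decoding cell $\tau^{-1}(w)$. Fix $w\in\gX$. The pushforward mass at $w$ is
\begin{equation*}
\Pr_{x\sim\pstar}\big[\tau(x)=w\big]=\int_{\tau^{-1}(w)}\pstar(x)\,\mathrm{d}x=\frac{1}{Z}\int_{\tau^{-1}(w)}p_\theta(x)\,\ell(\tau(x))\,\mathrm{d}x .
\end{equation*}
The key observation is that $\ell(\tau(x))=\ell(w)$ for every $x$ in the cell $\tau^{-1}(w)$. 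The tilt is therefore a constant on the domain of integration and can be pulled out of the integral.

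What remains is $\frac{1}{Z}\ell(w)\int_{\tau^{-1}(w)}p_\theta(x)\,\mathrm{d}x$. By \eqref{eq:tau-pushforward}, the integral equals $P_\theta(w)$, so the mass at $w$ is $P_\theta(w)\ell(w)/Z=\Pstar(w)$ by \defref{def:p_x0_tild}.

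Two consistency points need checking. First, $\pstar$ must be a genuine probability density with the same constant $Z$ as in \defref{def:p_x0_tild}. The cells $\{\tau^{-1}(w)\}_{w\in\gX}$ partition $\R^{L\times d}$ and $\gX$ is finite, so
\begin{equation*}
\int p_\theta(x)\,\ell(\tau(x))\,\mathrm{d}x=\sum_{w\in\gX}\int_{\tau^{-1}(w)}p_\theta(x)\,\ell(\tau(x))\,\mathrm{d}x=\sum_{w\in\gX}\ell(w)P_\theta(w)=Z .
\end{equation*}
This computation also requires $Z\in(0,\infty)$, which I will state as a standing assumption (equivalently, $\ell$ is not $P_\theta$-a.s.\ zero). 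Second, the argument requires $\tau$ to be measurable with measurable cells. This holds because each cell is defined by finitely many inequalities between cosine similarities.

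The only real subtlety is ties in the $\argmax$ of \eqref{eq:embed-decode}, which make $\tau$ ill-defined on cell boundaries. I would handle this by fixing any deterministic tie-breaking rule, so that the cells form an honest partition. Alternatively, one can note that the boundaries are finite unions of hypersurfaces of Lebesgue measure zero, hence null under the density $p_\theta$, and so do not affect any of the integrals above. Once this is settled, the proof is the two-line computation.
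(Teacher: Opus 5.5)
Your proof is correct and matches the paper's argument essentially line for line: you integrate $\pstar$ over each cell $\tau^{-1}(w)$, pull out the constant $\ell(w)$ to obtain $P_\theta(w)\ell(w)/Z$, and use the partition of $\R^{L\times d}$ into cells to identify $Z$ with the normalizing constant of $\Pstar$. Your remarks on requiring $Z\in(0,\infty)$ and on breaking ties in $\tau$ are sensible additions; they sharpen the paper's note that nothing beyond measurability of $\tau$ is needed.
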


\begin{proof}
Write $Z=\int_{\R^{L\times d}} p_\theta(x)\,\ell(\tau(x))\,\mathrm{d}x$ for the \rev{normalizing} constant of
$\pstar$. On the cell $\tau^{-1}(w)$ the decode map is constant by definition, $\tau(x)=w$, so
$\ell(\tau(x))=\ell(w)$ factors out of the integral:
\begin{equation}
\label{eq:lifting-cell}
\begin{aligned}
&\Pr_{x\sim\pstar}\big[\tau(x)=w\big]
=\int_{\tau^{-1}(w)}\pstar(x)\,\mathrm{d}x
=\frac{1}{Z}\int_{\tau^{-1}(w)}p_\theta(x)\,\ell(\tau(x))\,\mathrm{d}x
\\[2pt]
&=\frac{\ell(w)}{Z}\int_{\tau^{-1}(w)}p_\theta(x)\,\mathrm{d}x
=\frac{P_\theta(w)\,\ell(w)}{Z} .
\end{aligned}
\end{equation}
The same decomposition identifies the constant. Since the cells $\{\tau^{-1}(w)\}_{w\in\gX}$ partition
$\R^{L\times d}$,
\begin{equation}
\label{eq:lifting-constant}
Z=\sum_{w\in\gX}\int_{\tau^{-1}(w)}p_\theta(x)\,\ell(\tau(x))\,\mathrm{d}x
 =\sum_{w\in\gX}P_\theta(w)\,\ell(w),
\end{equation}
which is exactly the \rev{normalizing} constant of $\Pstar$ in \eqref{eq:tilde_prior}. Hence
$\tau(x)\sim\Pstar$ for $x\sim\pstar$.
\end{proof}

Note that only the constancy of $\ell\circ\tau$ on cells was used; no property of $p_\theta$ and no
regularity of $\tau$ beyond measurability is required. This is what licenses tilting in the continuous
space and decoding afterwards, and for SeqFlow it is applied a second time with
$\tau$ replaced by $\tau\circ g_\theta$ (\appref{proof:det-pushforward}).

\subsection{Proof of \propref{prop:tilted-marginal}}
\label{proof:tilted-marginal}

\paragraph{Statement.} With $\ell_t(z_t)\triangleq\E_{p(x\mid z_t)}\big[\ell(\tau(x))\big]$,
\propref{prop:tilted-marginal} asserts that the tilted marginal and the tilted posterior factor as
\begin{equation*}
\pstar_t(z_t)\;\propto\;p_t(z_t)\,\ell_t(z_t),
\qquad
\pstar(x\mid z_t)\;\propto\;p(x\mid z_t)\,\ell(\tau(x)),
\end{equation*}
the first being \eqref{eq:tilted-marginal}.

\begin{proof}
Expand $\pstar_t$ using \defref{def:p_x0_tild} and \defref{def:p_t}:
\begin{equation}
\pstar_t(z_t)=\int q_t(z_t\mid x)\, \pstar(x)\, \mathrm{d}x
\propto \int q_t(z_t\mid x)\, p_\theta(x)\, \ell(\tau(x))\, \mathrm{d}x .
\end{equation}
By Bayes' rule, $q_t(z_t\mid x)\,p_\theta(x)=p(x\mid z_t)\,p_t(z_t)$, so
\begin{equation}
\pstar_t(z_t) \propto p_t(z_t)\int p(x\mid z_t)\,\ell(\tau(x))\,\mathrm{d}x
 = p_t(z_t)\,\ell_t(z_t),
\end{equation}
with $\ell_t(z_t)=\E_{p(x\mid z_t)}[\ell(\tau(x))]$, which is \eqref{eq:tilted-marginal}.

For the second claim, apply Bayes' rule to the tilted pair $\big(\pstar,q_t\big)$ and substitute both
expansions above:
\begin{equation}
\pstar(x\mid z_t)=\frac{\pstar(x)\,q_t(z_t\mid x)}{\pstar_t(z_t)}
 \propto \frac{p_\theta(x)\,\ell(\tau(x))\,q_t(z_t\mid x)}{p_t(z_t)\,\ell_t(z_t)}
=\underbrace{\frac{p_\theta(x)\,q_t(z_t\mid x)}{p_t(z_t)}}_{= \;p(x\mid z_t)}\cdot
\frac{\ell(\tau(x))}{\ell_t(z_t)} .
\end{equation}
Since $\ell_t(z_t)$ does not depend on $x$, it is absorbed into the \rev{normalizing} constant, giving
$\pstar(x\mid z_t)\propto p(x\mid z_t)\,\ell(\tau(x))$, which is \eqref{eq:tilted-posterior}.
\end{proof}

\subsection{Change of \rev{Variables} to the \rev{Flow Latent}}
\label{proof:det-pushforward}
\secref{sec:method-two} runs SeqFlow in its latent $z=g_\theta^{-1}(x)$ rather than in $x$. This requires two facts: the tilted target \eqref{eq:tilted-embedding}, pushed through $g_\theta^{-1}$, is again a Gaussian tilted by the acquisition, now composed with the decoder, \eqref{eq:push-forward}; and decoding a draw of it recovers $\Pstar$. Both follow from the change-of-variables formula, because the Jacobian of the flow enters the tilted and the untilted density in the same way.

\begin{proposition}[Flow-latent tilt]
\label{prop:det-pushforward}
Let $g_\theta$ be a diffeomorphism of $\R^{L\times d}$ such that $z\sim\mathcal{N}(0,\rev{I})$ implies
$g_\theta(z)\sim p_\theta$, as in \eqref{eq:seqflow-pushforward}, and let
$\pi^\star(z)\triangleq\pstar\big(g_\theta(z)\big)\,\lvert J(z)\rvert$, with $J(z)$ the Jacobian
determinant of $g_\theta$ at $z$. Then
\begin{equation*}
\pi^\star(z)=\frac{1}{Z}\,\mathcal{N}(z;0,\rev{I})\;\ell\big(\tau(g_\theta(z))\big),
\end{equation*}
which is \eqref{eq:push-forward}, and $z\sim\pi^\star$ implies $\tau(g_\theta(z))\sim\Pstar$.
\end{proposition}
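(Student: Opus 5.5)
The argument has two steps: a change-of-variables identity for the untilted flow, and then the Lifting result (\propref{prop:lifting}) applied with a different decoder.

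\emph{Step 1: the density formula.} Since $g_\theta$ is a diffeomorphism with $g_\theta(z)\sim p_\theta$ for $z\sim\mathcal N(0,I)$, the change-of-variables formula gives, for every $z$,
\begin{equation*}
p_\theta\big(g_\theta(z)\big)\,\lvert J(z)\rvert=\mathcal N(z;0,I).
\end{equation*}
More precisely, the push-forward density is $p_\theta(x)=\mathcal N(g_\theta^{-1}(x);0,I)\,\lvert\det J_{g_\theta^{-1}}(x)\rvert$. Evaluating at $x=g_\theta(z)$ and using $\det J_{g_\theta^{-1}}(g_\theta(z))=1/J(z)$ (inverse function theorem) yields the identity.

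Now substitute the definition \eqref{eq:tilted-embedding} of $\pstar$ into $\pi^\star$:
\begin{equation*}
\pi^\star(z)=\tfrac1Z\,p_\theta\big(g_\theta(z)\big)\,\ell\big(\tau(g_\theta(z))\big)\,\lvert J(z)\rvert .
\end{equation*}
The Jacobian factor pairs with $p_\theta(g_\theta(z))$ and, by the identity above, becomes $\mathcal N(z;0,I)$. This gives the claimed formula with the same constant $Z$.

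\emph{Step 2: checking the constant and normalization.} We must make sure $Z$ here is the normalizer of $\Pstar$ and that $\pi^\star$ integrates to one. Since $\pi^\star$ is the change-of-variables transport of the density $\pstar$ under $g_\theta^{-1}$, it integrates to $\int\pstar=1$. Alternatively, the substitution $x=g_\theta(z)$ maps $\int\mathcal N(z;0,I)\,\ell(\tau(g_\theta(z)))\,dz$ to $\int p_\theta(x)\,\ell(\tau(x))\,dx=Z$.

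\emph{Step 3: the sampling claim.} There are two routes. The first: if $z\sim\pi^\star$, then $x=g_\theta(z)$ has density $\pstar$, by the converse change of variables (this is how $\pi^\star$ was defined). Then \propref{prop:lifting} gives $\tau(x)\sim\Pstar$. The second route applies \propref{prop:lifting} directly in latent space. Take base density $\mathcal N(0,I)$ and decoder $\tau\circ g_\theta$, which is measurable and has cells $A_w$. The induced mass function is $P_\theta$ by \eqref{eq:seqflow-pushforward}, so the lifting argument (only constancy of $\ell\circ\tau\circ g_\theta$ on $A_w$ is used) gives $\Pr[\tau(g_\theta(z))=w]=P_\theta(w)\ell(w)/Z=\Pstar(w)$.

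The main obstacle is only bookkeeping. One must handle the Jacobian direction correctly, $J$ of $g_\theta$ versus that of $g_\theta^{-1}$, so that the factor cancels rather than squares. One must also justify that $(\tau\circ g_\theta)^{-1}(w)=g_\theta^{-1}(\tau^{-1}(w))$ are measurable cells partitioning the latent space. This holds because $g_\theta$ is a bijection and $\tau$ is measurable.
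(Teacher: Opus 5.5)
Your proposal is correct and essentially matches the paper's proof: both cancel the Jacobian through $p_\theta(g_\theta(z))\lvert J(z)\rvert=\mathcal{N}(z;0,I)$ and get the sampling claim from $g_\theta(z)\sim\pstar$ plus \propref{prop:lifting}, and your second route (lifting with the decoder $\tau\circ g_\theta$ on the cells $A_w$) is the argument the paper gives separately in \appref{app:sec-seqflow}. One small correction: measurability of $g_\theta^{-1}(\tau^{-1}(w))$ follows from the continuity of $g_\theta$ together with measurability of $\tau$, not from bijectivity.
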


\begin{proof}
Since $g_\theta$ is a diffeomorphism, the change-of-variables formula applies to $\pstar$ and to $p_\theta$ with the \emph{same} Jacobian factor. Writing $J(z)$ for the Jacobian determinant of $g_\theta$ at $z$,
\begin{equation}
\label{eq:cov-pair}
\pi^\star(z)=\pstar\big(g_\theta(z)\big)\,\lvert J(z)\rvert,
\qquad
\mathcal{N}(z;0,\rev{I})=p_\theta\big(g_\theta(z)\big)\,\lvert J(z)\rvert ,
\end{equation}
the second because $g_\theta$ transports $\mathcal{N}(0,\rev{I})$ to $p_\theta$, as in \eqref{eq:seqflow-pushforward}. Substituting $\pstar=\tfrac1Z\,p_\theta\cdot(\ell\circ\tau)$ from \eqref{eq:tilted-embedding} into the first and \rev{recognizing} the second,
\begin{equation}
\label{eq:det-pushforward}
\pi^\star(z)
=\frac{1}{Z}\,
\underbrace{p_\theta\big(g_\theta(z)\big)\,\lvert J(z)\rvert}_{=\;\mathcal{N}(z;0,\rev{I})}\;
\ell\big(\tau(g_\theta(z))\big),
\end{equation}
which is \eqref{eq:push-forward}: the Jacobian is common to both densities and cancels in the ratio $\pi^\star/\mathcal{N}(0,\rev{I})=(\pstar/p_\theta)\circ g_\theta$. Finally, $z\sim\pi^\star$ gives $g_\theta(z)\sim\pstar$ by the first identity of \eqref{eq:cov-pair}, so $\tau(g_\theta(z))\sim\Pstar$ by \appref{proof:lifting}.
\end{proof}

\subsection{Proof of \propref{prop:guidance-mean-swap}}
\label{proof:guidance-mean-swap}

\paragraph{Statement.} For the Gaussian kernel \eqref{eq:kernel-x} with $\alpha_t,\sigma_t>0$, and with
$\zhat,\ztilt$ as in \defref{def:tilted-estimate}, \propref{prop:guidance-mean-swap} asserts
\begin{equation*}
\nabla_{z_t}\log\ell_t(z_t)=\frac{\alpha_t\big(\ztilt-\zhat\big)}{\sigma_t^2},
\qquad
\nabla_{z_t}\log p_t(z_t)=\frac{\alpha_t\,\zhat-z_t}{\sigma_t^2},
\end{equation*}
so that the guided score \eqref{eq:score-split} is $\big(\alpha_t\ztilt-z_t\big)/\sigma_t^2$, i.e.\
\eqref{eq:mean-swap} and \eqref{eq:guided-score}.

\begin{proof}
We differentiate $\log\ell_t$ directly and exhibit the two means whose $z_t$ cancels. Write
\begin{equation}
\label{eq:I-definition}
I(z_t) \triangleq \int \ell(\tau(x))\, q_t(z_t\mid x)\, p_\theta(x)\, \mathrm{d}x .
\end{equation}
By Bayes' rule, $p(x\mid z_t)=q_t(z_t\mid x)\,p_\theta(x)/p_t(z_t)$, so the definition of $\ell_t$ in
\propref{prop:tilted-marginal} reads
\begin{equation}
\label{eq:ell-q-ratio}
\ell_t(z_t)=\int \ell(\tau(x))\,p(x\mid z_t)\,\mathrm{d}x
=\frac{1}{p_t(z_t)}\int \ell(\tau(x))\,q_t(z_t\mid x)\,p_\theta(x)\,\mathrm{d}x
=\frac{I(z_t)}{p_t(z_t)},
\end{equation}
and therefore
\begin{equation}
\label{eq:log-ell-split}
\nabla_{z_t}\log\ell_t(z_t) = \nabla_{z_t}\log I(z_t) - \nabla_{z_t}\log p_t(z_t).
\end{equation}
Since $q_t$ is the
Gaussian kernel \eqref{eq:kernel-x}, $\nabla_{z_t}\log q_t(z_t\mid x)=(\alpha_t\,x-z_t)/\sigma_t^2$,
so by the log-derivative rule $\nabla_{z_t} q_t=q_t\,(\alpha_t\,x-z_t)/\sigma_t^2$. Consequently
\begin{equation}
\label{eq:grad-log-I}
\begin{aligned}
\nabla_{z_t}\log I(z_t)
&=\frac{\nabla_{z_t} I(z_t)}{I(z_t)}
=
\frac{\displaystyle\int \ell(\tau(x))\,\frac{\alpha_t\,x-z_t}{\sigma_t^2}\,q_t(z_t\mid x)\,p_\theta(x)\,\mathrm{d}x}
{\displaystyle\int \ell(\tau(x))\,q_t(z_t\mid x)\,p_\theta(x)\,\mathrm{d}x}\\[6pt]
&=
\frac{\displaystyle\int \ell(\tau(x))\,\frac{\alpha_t\,x-z_t}{\sigma_t^2}\,p(x\mid z_t)\,\mathrm{d}x}
{\displaystyle\int \ell(\tau(x))\,p(x\mid z_t)\,\mathrm{d}x}\\[6pt]
&=
\frac{1}{\sigma_t^2}\cdot
\frac{\alpha_t\displaystyle\int x\,\ell(\tau(x))\,p(x\mid z_t)\,\mathrm{d}x\;-\;z_t\,\ell_t(z_t)}{\ell_t(z_t)}\\[6pt]
&=
\frac{1}{\sigma_t^2}\left(\alpha_t\int x\,\frac{\ell(\tau(x))\,p(x\mid z_t)}{\ell_t(z_t)}\,\mathrm{d}x\;-\;z_t\right)\\[6pt]
&=
\frac{1}{\sigma_t^2}\left(\alpha_t\int x\,\pstar(x\mid z_t)\,\mathrm{d}x\;-\;z_t\right)
=
\frac{\alpha_t\,\ztilt - z_t}{\sigma_t^2}.
\end{aligned}
\end{equation}

The second term of \eqref{eq:log-ell-split} follows from the same steps applied to
$p_t(z_t)=\int q_t(z_t\mid x)\,p_\theta(x)\,\mathrm{d}x$. Now the denominator is $p_t(z_t)$ itself, so
dividing by it gives $p(x\mid z_t)$ at once, and the second member carries
$\int p(x\mid z_t)\,\mathrm{d}x=1$ in place of $\ell_t(z_t)$:
\begin{equation}
\label{eq:tweedie-general}
\begin{aligned}
\nabla_{z_t}\log p_t(z_t)
&=\frac{\nabla_{z_t}p_t(z_t)}{p_t(z_t)}
=\int \frac{\alpha_t\,x-z_t}{\sigma_t^2}\,\frac{q_t(z_t\mid x)\,p_\theta(x)}{p_t(z_t)}\,\mathrm{d}x\\[6pt]
&=\frac{1}{\sigma_t^2}\left(\alpha_t\int x\,p(x\mid z_t)\,\mathrm{d}x\;-\;z_t\int p(x\mid z_t)\,\mathrm{d}x\right)
=\frac{\alpha_t\,\zhat - z_t}{\sigma_t^2},
\end{aligned}
\end{equation}
where $\zhat=\E_{p(\cdot\mid z_t)}[x]$; this is Tweedie's formula for this kernel and is the
untilted score quoted in \propref{prop:guidance-mean-swap}.

Substituting \eqref{eq:grad-log-I} and \eqref{eq:tweedie-general} into \eqref{eq:log-ell-split}, the $z_t$
terms --- identical in both --- cancel:
\begin{equation}
\nabla_{z_t}\log\ell_t(z_t)=\frac{\alpha_t\,\ztilt - z_t}{\sigma_t^2}-\frac{\alpha_t\,\zhat - z_t}{\sigma_t^2}
=\frac{\alpha_t\big(\ztilt - \zhat\big)}{\sigma_t^2},
\end{equation}
which is \eqref{eq:mean-swap}. Adding \eqref{eq:tweedie-general} back recovers \eqref{eq:guided-score}.
\end{proof}

\subsection{Estimating the \rev{Tilted Mean}}
\label{proof:snis}
By \propref{prop:guidance-mean-swap}, the guidance term depends on the frozen model only through the tilted posterior mean $\ztilt$ of \defref{def:tilted-estimate}. Its untilted counterpart $\zhat$ comes with the score (i), but $\ztilt$ is a ratio of two expectations under $p(\cdot\mid z_t)$ with $\ell$ a black box, and has no closed form. This section gives the estimator used in \secref{sec:method-estimator}, how its draws are obtained for each backbone, and the two parameters that set the strength of the tilt.

\begin{proposition}[Self-\rev{normalized} importance-sampling (SNIS) estimator]
\label{prop:snis}
Draw $N$ i.i.d.\ samples $x^{(n)}\sim p(x\mid z_t)$ and write $\ell_n \triangleq \ell(\tau(x^{(n)}))$.
Then
\begin{equation}
\label{eq:snis}
\ztilt \approx \sum_{n=1}^{N} h_n\,x^{(n)},
\qquad
h_n = \frac{\ell_n}{\sum_{m}\ell_m},
\qquad
\sum_n h_n = 1 .
\end{equation}
\end{proposition}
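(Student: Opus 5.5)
The statement says the self-normalized estimator recovers $\ztilt$; the main text claims it is consistent. I will therefore prove that $\sum_n h_n x^{(n)} \to \ztilt$ almost surely as $N\to\infty$, for fixed $z_t$. The identity $\sum_n h_n=1$ is immediate from the definition of $h_n$, provided the denominator is positive.

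The plan is to write the estimator as a ratio of two empirical means,
\[
\sum_{n=1}^N h_n x^{(n)} \;=\; \frac{\frac1N\sum_{n} \ell_n\, x^{(n)}}{\frac1N\sum_{n}\ell_n},
\]
and handle numerator and denominator separately. By \defref{def:tilted-estimate}, $\ztilt$ is the ratio of $\E_{p(x\mid z_t)}[\ell(\tau(x))\,x]$ to $\E_{p(x\mid z_t)}[\ell(\tau(x))]=\ell_t(z_t)$, so it suffices to show that each empirical mean converges to its expectation.

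\begin{enumerate}
\item The draws $x^{(n)}$ are i.i.d.\ from $p(\cdot\mid z_t)$, so the pairs $(\ell_n,\ell_n x^{(n)})$ are i.i.d.
\item By the strong law of large numbers, the denominator converges almost surely to $\ell_t(z_t)$ and the numerator to $\E[\ell(\tau(x))\,x]$, provided both expectations are finite.
\item Since $\ell_t(z_t)>0$, the map $(a,b)\mapsto a/b$ is continuous at the limit point. The continuous mapping theorem then gives almost sure convergence of the ratio to $\ztilt$.
\item The positivity $\ell_t(z_t)>0$ also means that, almost surely, $\sum_m \ell_m>0$ for all sufficiently large $N$. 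So $h_n$ is eventually well defined and $\sum_n h_n=1$ holds.
\end{enumerate}

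The main obstacle is justifying these integrability and positivity conditions, since $\ell$ is only assumed non-negative and may be unbounded (e.g.\ EI).
\begin{itemize}
\item \textbf{Denominator.} Finiteness of $\ell_t(z_t)$ is implicit in \propref{prop:tilted-marginal}, because $\pstar$ must be normalizable. Positivity requires that $\ell\circ\tau$ is not $p(\cdot\mid z_t)$-a.s.\ zero; I will state this as an assumption, noting it holds for EI and PI under a GP, which are strictly positive.
\item \textbf{Numerator.} The cleanest route is to note that $\E_{p(x\mid z_t)}[\ell(\tau(x))\lVert x\rVert]=\ell_t(z_t)\,\E_{\pstar(x\mid z_t)}[\lVert x\rVert]$. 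This is finite whenever the tilted posterior has a first moment, which is already needed for $\ztilt$ in \defref{def:tilted-estimate} to exist.
\item \textbf{CDCD case.} Here $x$ ranges over the finite set of embedding sequences, so all moments are trivially finite.
\item \textbf{SeqFlow case.} The posterior in \eqref{eq:latent-posterior} is Gaussian. Assuming bounded $\ell$ on the relevant set, or a finite second moment of $\ell(\tau(x))$, suffices via Cauchy--Schwarz.
\end{itemize}

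Finally, I would remark that the estimator is biased at finite $N$, with bias $O(1/N)$ by the standard delta-method expansion, and is only consistent in the limit.
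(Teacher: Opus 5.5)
Your proposal is correct and follows the paper's route. You express $\ztilt$ as the ratio $\E_{p(\cdot\mid z_t)}[\ell(\tau(x))\,x]/\E_{p(\cdot\mid z_t)}[\ell(\tau(x))]$, estimate both expectations with the same draws, and conclude consistency with $O(1/N)$ bias, spelling out (SLLN, continuous mapping at $\ell_t(z_t)>0$, eventual positivity of $\sum_m\ell_m$) what the paper only asserts. One simplification: since $\gX=\vocab^L$ is finite, $\ell\circ\tau$ takes only finitely many values and is therefore bounded. The worry about unbounded acquisitions such as EI is moot, and you do not need the extra boundedness or second-moment assumption in your SeqFlow case.
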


\begin{proof}
We start by rewriting \defref{def:tilted-estimate}. By \eqref{eq:tilted-posterior},
$\pstar(x\mid z_t)=p(x\mid z_t)\,\ell(\tau(x))/\ell_t(z_t)$, hence
\begin{equation}
\label{eq:xtilt-ratio}
\ztilt
= \frac{1}{\ell_t(z_t)}\int x\,p(x\mid z_t)\,\ell(\tau(x))\,\mathrm{d}x
= \frac{\E_{p(\cdot\mid z_t)}\big[\ell(\tau(x))\,x\big]}
       {\E_{p(\cdot\mid z_t)}\big[\ell(\tau(x))\big]} ,
\end{equation}
the last step by the definition of $\ell_t$ in \propref{prop:tilted-marginal}. Neither expectation
admits a closed form: $\ell$ is a scalar function of the whole decoded object and does not \rev{factorize}
over the coordinates of $x$, so numerator and denominator both couple all $L$ positions.
Estimating the two expectations of \eqref{eq:xtilt-ratio} by Monte Carlo, using the same $N$ draws
$x^{(n)}\sim p(\cdot\mid z_t)$ in both, gives
\begin{equation}
\label{eq:mc-weights}
\ztilt
\approx
\frac{\tfrac1N\sum_{n=1}^{N}\ell_n\,x^{(n)}}{\tfrac1N\sum_{m=1}^{N}\ell_m}
= \sum_{n=1}^{N}\underbrace{\frac{\ell_n}{\sum_{m}\ell_m}}_{h_n}x^{(n)} ,
\qquad \sum_n h_n = 1 .
\end{equation}
The estimator is a ratio of two unbiased estimators sharing the same draws, hence consistent but
biased at finite $N$, with bias $O(1/N)$; the self-\rev{normalization} in $h_n$ is what removes the
unknown $\ell_t(z_t)$.
\end{proof}

\paragraph{Two knobs, and they are not the same operation.}\label{app:knobs} In \eqref{eq:tilde_prior} the acquisition enters with unit exponent; in practice the strength of the tilt is a tuning parameter. It can be adjusted at two points of the estimator: after the tilted mean has been formed, or inside the weights. The first, $\lambda$, scales the guidance
term of \eqref{eq:score-split} after the tilted mean has been formed,
$\lambda\nabla_{z_t}\log\ell_t=\nabla_{z_t}\log\ell_t^{\lambda}$:
\begin{equation}
\label{eq:linear_tilt}
\nabla_{z_t}\log p_t + \lambda\,\nabla_{z_t}\log\ell_t
= \frac{\alpha_t\big(\zhat+\lambda(\ztilt-\zhat)\big)-z_t}{\sigma_t^2}
= \frac{\alpha_t\,\ztiltg-z_t}{\sigma_t^2}.
\end{equation}
This leaves the draws and the weights of \eqref{eq:snis} untouched and moves only the
target, along the segment from $\zhat$ to $\ztilt$. Powering $\ell$ first and \rev{marginalizing} second changes only the
weights,
\begin{equation}
\label{eq:power_tilt}
h_n^{(\gamma)}=\softmax_n\Big(\tfrac1\gamma\log\ell_n\Big)
=\frac{\ell_n^{1/\gamma}}{\sum_m \ell_m^{1/\gamma}},
\qquad \ell\mapsto\ell^{1/\gamma},
\qquad \ell_t^{(\gamma)}=\E_{p(\cdot\mid z_t)}\big[\ell^{1/\gamma}\big].
\end{equation}
The target keeps the form of \eqref{eq:guided-score} and the weight vector is deformed instead:
$\gamma\to0$ collapses $h$ onto the best sampled candidate, while $\gamma\to\infty$ flattens it to
uniform and recovers the untilted $\zhat$.

\section{CDCD}
\label{app:sec-cdcd}
\begin{table}[t]
\centering
\small
\setlength{\tabcolsep}{4pt}
\caption{What each backbone supplies to the guided score~\eqref{eq:score-split}. Quantities (i) and (ii) are backbone-specific; the guidance term $\nabla_{z_t}\log\ell_t$ is estimated from the draws of (ii) (\secref{sec:method-estimator}).}
\label{tab:easy-hard}
\begin{tabular}{@{}lp{0.35\textwidth}p{0.35\textwidth}@{}}
\toprule
 & \textbf{CDCD} & \textbf{SeqFlow} \\
\midrule
noised variable & embedding $x$ & flow latent $g_\theta^{-1}(x)$ \\[2pt]
schedule $(\alpha_t,\sigma_t)$ & $(1,\,t)$, VE & $\big(\rho_t,\,\sqrt{1-\rho_t^2}\big)$, VP~\rev{\eqref{eq:seqflow-kernel}} \\[2pt]
(i) $\nabla_{z_t}\log p_t$ & learned~\eqref{eq:score-interp} & $-z_t$, exact \\[2pt]
(ii) draws from $p(\cdot\mid z_t)$ & mean-field $\tilde p$~\eqref{eq:mean-field}, one network pass & exact Gaussian~\eqref{eq:latent-posterior}, $N$ passes of $g_\theta$ \\[2pt]
$\nabla_{z_t}\log\ell_t$ & SNIS~\eqref{eq:snis} or REINFORCE~\eqref{eq:reinforce} & SNIS~\eqref{eq:snis} \\[2pt]
noise end & $\pstar_T\approx\mathcal{N}(0,T^2 I)$ & $\pi^\star_T\approx\mathcal{N}(0,I)$ \\[2pt]
anchor scale $\rev{\sigma}$ & $\rev{\sigma}=T$ & $\rev{\sigma}=1$ \\
\bottomrule
\end{tabular}
\end{table}

\algref{alg:cdcd-guided} covers the guided-sampling step when the backbone is a diffusion model.

\begin{algorithm}[t]
\caption{\textsc{GuidedSample} --- CDCD. Instantiates \secref{sec:method-two} at
$(\alpha_t,\sigma_t)=(1,t)$. CDCD's code was never released, so we adapted a
public reimplementation \citep{sacco2024textdiffusion}.}
\label{alg:cdcd-guided}
\begin{algorithmic}[1]
\REQUIRE $z_{\text{init}}$; frozen $\mathrm{net}_\theta$ and embedding table $E$;
         $\acq$, $\gamma$, $\lambda$, $N$, $\dataset$
\STATE $z_T \leftarrow z_{\text{init}}$
\FOR{$t \in [T,\dots,0]$, \ $\delta = t_{\text{curr}} - t_{\text{next}}$}
  \STATE $\mathrm{logits},\ \zhat \leftarrow \mathrm{net}_\theta(z_t,t)$
         \COMMENT{\eqref{eq:score-interp}}
  \STATE $w^{(n)} \sim p_\theta(\cdot \mid z_t,t)$, \ $n=1..N$
         \COMMENT{Eq.~(\plaineqref{eq:cdcd-posterior})}
  \STATE $h_n \leftarrow \softmax_n\big(\tfrac1\gamma \log \acq(w^{(n)};\dataset)\big)$
         \COMMENT{Eq.~(\plaineqref{eq:power_tilt})}
  \STATE $\ztilt \leftarrow \sum_n h_n\,E(w^{(n)})$
         \COMMENT{\propref{prop:snis} or \propref{prop:reinforce}}
  \STATE $z_{t-\delta} \leftarrow z_t + \delta\,t\,
         \dfrac{\zhat + \lambda(\ztilt - \zhat) - z_t}{t^{2}}$
        \COMMENT{Eq.~(\plaineqref{eq:linear_tilt})}
\ENDFOR
\RETURN $z_0$
\end{algorithmic}
\end{algorithm}

\paragraph{Instantiation.} CDCD's forward process \eqref{eq:cdcd-forward} is the kernel
\eqref{eq:kernel-x} of \defref{def:p_t} at
$(\alpha_t,\sigma_t)=(1,t)$, so $q_t(z_t\mid x)=\mathcal N(z_t;x,t^{2}\rev{I})$ with $x=E(w)$. \propref{prop:guidance-mean-swap} then
reads
\begin{equation}
\label{eq:cdcd-guided-score}
\nabla_{z_t}\log\pstar_t(z_t) = \frac{1}{t^{2}} \big[( \zhat - z_t) + (\ztilt - \zhat)\big] =\frac{\ztilt-z_t}{t^{2}},
\end{equation}
whose first term is CDCD's own score \eqref{eq:score-interp}: guidance replaces $\zhat$ by $\ztilt$
and leaves the sampler otherwise untouched. The sampler integrates \eqref{eq:cdcd-guided-score} by an
Euler step (\algref{alg:cdcd-guided}, line 7), which carries a \rev{discretization} error per
step --- a cost SeqFlow does not pay (\appref{app:sec-seqflow}).

With $x=E(w)$, \defref{def:tilted-estimate} reads
\begin{equation}
\label{eq:xtilt-sum}
\ztilt=\frac{ \sum_{w \in \vocab^L}p_\theta(w \mid z_t)\, \ell(w)\,E(w)}
             { \sum_{w \in \vocab^L}p_\theta(w \mid z_t)\, \ell(w)} .
\end{equation}
Both sums range over $\vocab^L$, and neither decomposes: $\ell$ is a scalar function of the whole
decoded molecule, $\ell\big((w_1,\dots,w_L)\big)$, so it couples all $L$ positions. Note the contrast
with $\zhat$, which is an expectation of a quantity that is additive across positions and is therefore
determined by the marginals \eqref{eq:cdcd-posterior} alone; $\ztilt$ is not, and must be estimated by
\propref{prop:snis}.

\paragraph{Sampling the denoising posterior.} This is where CDCD incurs its one
structural approximation, and it is worth being precise about whose approximation it is.
\propref{prop:snis} needs draws from $p(\cdot\mid z_t)$. For a diffusion model, drawing from the true
denoising posterior means running the reverse process from $z_t$ all the way to the data end, once per
sample: at $N$ samples per guidance step and $K$ steps per trajectory, that is $NK$ rollouts, which is
prohibitive. What is available for free is \eqref{eq:cdcd-posterior}, the per-position marginals, since
they come from the same forward pass that already produced $\zhat$ and drawing from a \rev{factorized}
categorical costs nothing. We therefore adopt the mean-field approximation
\begin{equation}
\label{eq:mean-field}
p(x\mid z_t)\ \approx\ \tilde p(x\mid z_t)\triangleq\textstyle\prod_{i=1}^{L} p_\theta(w_i\mid z_t,t),
\qquad x=E(w),
\end{equation}
and sample positions independently.

Two things should be said about it. First, it is \emph{our} approximation, introduced in the
estimator, not an assumption CDCD makes: the model never represents the joint because the score does
not require it, $\zhat$ being determined by the marginals alone. Second, it is empirically benign ---
we compared \eqref{eq:mean-field} against exact rollouts of the reverse process and found no
degradation in \rev{optimization} performance, at a small fraction of the cost (\tabref{tab:mean-field-ab}). The SeqFlow version needs neither: its posterior \eqref{eq:seqflow-posterior} is exactly Gaussian and fully correlated, so a draw from it is a draw from the true posterior.
\begin{table}[t]
\centering
\small
\caption{The mean-field approximation \eqref{eq:mean-field} against rollouts of the
reverse process. $K$ is the number of inner denoising steps used to draw each of the
$N$ SNIS samples: $K=0$ is the \rev{factorized} draw from the per-position marginals
\eqref{eq:cdcd-posterior}, i.e.\ \eqref{eq:mean-field}, and is what \algref{alg:cdcd-guided}
uses; $K>0$ replaces it by a $K$-step ancestral rollout, which couples the positions through
the network and resolves the joint. Final objective value after $300$ BO epochs on six
GuacaMol/TDC objectives, mean $\pm$ sd over $3$ seeds; higher is better. $\Delta$ is
$K{=}32$ minus $K{=}0$, and the last column expresses it in units of the pooled
seed-to-seed sd. At $K=32$ --- beyond the point where proposal fidelity saturates, and at
$\approx 24\times$ the wall clock ($8.77$ against $0.36$ h per arm) --- no objective improves
beyond the seed spread and one degrades.}
\label{tab:mean-field-ab}
\begin{tabular}{@{}lcccr@{}}
\toprule
objective & $K=0$ & $K=32$ & $\Delta$ & $\Delta/\mathrm{sd}$ \\
\midrule
med2 & $0.1637 \pm 0.0193$ & $0.1694 \pm 0.0243$ & $+0.0058$ & $+0.26$ \\
osmb & $0.7094 \pm 0.0578$ & $0.7055 \pm 0.0611$ & $-0.0039$ & $-0.07$ \\
adip & $0.4335 \pm 0.0297$ & $0.4483 \pm 0.0029$ & $+0.0148$ & $+0.70$ \\
zale & $0.3811 \pm 0.0935$ & $0.2842 \pm 0.0612$ & $-0.0969$ & $-1.23$ \\
rano & $0.7503 \pm 0.0361$ & $0.7088 \pm 0.0570$ & $-0.0416$ & $-0.87$ \\
qed  & $0.8992 \pm 0.0370$ & $0.9106 \pm 0.0387$ & $+0.0114$ & $+0.30$ \\
\bottomrule
\end{tabular}
\end{table}

\paragraph{REINFORCE estimator.}
\propref{prop:snis} estimates $\ztilt$ from draws of a posterior that is already available in closed
form, so the conditional score is analytic and Monte Carlo is needed only for the sequence-level
expectation of $\ell$. That estimator works directly on \defref{def:tilted-estimate}. A second,
more direct route starts from the observation that \propref{prop:guidance-mean-swap} can be rearranged
as $\ztilt = \zhat + t^{2}\,\nabla_{z_t}\log\ell_t(z_t)$. Since $\zhat$ is available from the forward
pass of the model, the only remaining obstacle is $\nabla_{z_t}\log\ell_t(z_t)$.

\begin{proposition}[REINFORCE estimator]
    \label{prop:reinforce}
    Given $\ztilt = \zhat + t^{2}\, \nabla_{z_t} \log \ell_t(z_t)$, its estimate may be written as
    \begin{equation}
    \label{eq:reinforce}
        \ztilt \approx \zhat + t^{2}\sum_{n=1}^{N} h_n\,\nabla_{z_t}\log p_\theta(w^{(n)}\mid z_t),
        \qquad h_n = \frac{\ell(w^{(n)})}{\sum_{m}\ell(w^{(m)})} .
    \end{equation}
\end{proposition}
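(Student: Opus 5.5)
The first step is to rewrite $\nabla_{z_t}\log\ell_t$ as a score-function (REINFORCE) expectation under the CDCD posterior. For CDCD the variable $x=E(w)$ ranges over finitely many points, so \defref{def:tilted-estimate} becomes the sum \eqref{eq:xtilt-sum}. The same holds for $\ell_t(z_t)=\sum_{w\in\vocab^L}p_\theta(w\mid z_t)\,\ell(w)$. Because the sum is finite and $\ell(w)$ does not depend on $z_t$, we can differentiate under the sum, and the log-derivative identity $\nabla_{z_t}p_\theta(w\mid z_t)=p_\theta(w\mid z_t)\nabla_{z_t}\log p_\theta(w\mid z_t)$ gives
\[
\nabla_{z_t}\log\ell_t(z_t)=\frac{\sum_{w}p_\theta(w\mid z_t)\,\ell(w)\,\nabla_{z_t}\log p_\theta(w\mid z_t)}{\sum_{w}p_\theta(w\mid z_t)\,\ell(w)}
=\E_{\pstar(w\mid z_t)}\big[\nabla_{z_t}\log p_\theta(w\mid z_t)\big].
\]
The second equality uses the tilted posterior \eqref{eq:tilted-posterior} from \propref{prop:tilted-marginal}. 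So the guidance term is a ratio of two expectations under $p_\theta(\cdot\mid z_t)$, exactly like the ratio \eqref{eq:xtilt-ratio} that defines $\ztilt$. The only change is that $x$ is replaced by the score $\nabla_{z_t}\log p_\theta(w\mid z_t)$.

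The second step is to estimate this ratio exactly as in \propref{prop:snis}. We draw $w^{(n)}\sim p_\theta(\cdot\mid z_t)$, in practice via the mean-field factorization \eqref{eq:mean-field}. We then estimate numerator and denominator by Monte Carlo with the same draws, and the common factor $1/N$ and the unknown normalizer cancel. This leaves self-normalized weights $h_n=\ell(w^{(n)})/\sum_m\ell(w^{(m)})$, so
\[
\nabla_{z_t}\log\ell_t\approx\sum_n h_n\nabla_{z_t}\log p_\theta(w^{(n)}\mid z_t).
\]
Substituting this into the identity $\ztilt=\zhat+t^2\nabla_{z_t}\log\ell_t(z_t)$ yields \eqref{eq:reinforce}. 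That identity is \propref{prop:guidance-mean-swap} at $(\alpha_t,\sigma_t)=(1,t)$, rearranged. The consistency and $O(1/N)$ bias remarks carry over verbatim from the SNIS argument.

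The main obstacle is the first step. \propref{prop:guidance-mean-swap} differentiates through the Gaussian kernel $q_t$, whereas here we differentiate the learned posterior $p_\theta(w\mid z_t)$ directly. I need the identity $\ell_t(z_t)=\E_{p_\theta(w\mid z_t)}[\ell(w)]$ to hold as a function of $z_t$, not merely pointwise. This holds exactly when $p_\theta$ is the true denoising posterior, since $\ell_t$ is defined through it. Under the learned or mean-field posterior it holds by definition of the estimator's target.

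I would state that the estimator targets the gradient of $\ell_t$ computed with whatever posterior the network supplies. Under the mean-field posterior the log-probability splits as $\sum_i\log p_\theta(w_i\mid z_t,t)$, and its gradient is obtained by backpropagating through the network logits. I would also note that when the model is exact, the two routes (SNIS on $x$ versus REINFORCE on the score) agree in expectation. This is because $\E_{\pstar(w\mid z_t)}[\nabla\log p(w\mid z_t)]=(\ztilt-\zhat)/t^2$ follows from the Gaussian Bayes-rule form $\nabla_{z_t}\log p(x\mid z_t)=\nabla\log q_t-\nabla\log p_t=(x-\zhat)/t^2$. This serves as a sanity check.
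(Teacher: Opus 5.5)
Your proof is correct and follows the paper's argument: you write $\ell_t$ as a finite sum against $p_\theta(w\mid z_t)$, apply the log-derivative trick to get a ratio of two expectations under the same posterior, estimate that ratio by self-normalized Monte Carlo exactly as in \propref{prop:snis}, and substitute into \propref{prop:guidance-mean-swap} rearranged at $(\alpha_t,\sigma_t)=(1,t)$. Your caveat is a refinement the paper leaves implicit: under a learned or mean-field posterior the estimator targets the gradient of the surrogate $\ell_t$, so the identity $\ztilt=\zhat+t^2\nabla_{z_t}\log\ell_t$ is exact only for the true posterior.
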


\begin{proof}
    By \eqref{eq:tilted-marginal}, $\ell_t(z_t)=\int \ell(\tau(x))\, p(x \mid z_t)\,\mathrm{d}x = \sum_{w \in \vocab^L} \ell(w)\,p_\theta(w\mid z_t)$. Applying the log-derivative trick, i.e.\ multiplying and dividing by $p_\theta$,
    $$
    \nabla_{z_t}\ell_t
    =\sum_{w \in \vocab^L}\ell(w)\, p_\theta \frac{\nabla_{z_t}p_\theta}{p_\theta}
    =\E_{p_\theta}\big[\ell\, \nabla_{z_t} \log p_\theta \big]
    $$
    $$
    \nabla_{z_t} \log \ell_t = \frac{\E_{p_\theta} \big[ \ell\, \nabla_{z_t} \log p_\theta\big]} {\E_{p_\theta} \big[\ell \big]} =
    \frac{\displaystyle \sum_{w \in \vocab^L} \nabla_{z_t}\log p_\theta(w\mid z_t)\,\ell(w)\,p_\theta(w\mid z_t)}{\displaystyle\sum_{w \in \vocab^L}p_\theta(w\mid z_t)\,\ell(w)} .
    $$
    This is again a ratio of two expectations under the same $p_\theta$, with the same intractable denominator $\ell_t$, so Monte Carlo applies as in \propref{prop:snis}:
    $$
    \nabla_{z_t}\log\ell_t \approx \frac{\frac1N \sum_{n=1}^{N}\ell_n\, \nabla_{z_t}\log p_\theta(w^{(n)} \mid z_t)}
    {\frac1N \sum_{m=1}^{N}\ell_m}
    = \sum_{n=1}^{N}\underbrace{\frac{\ell_n}{\sum_{m}\ell_m}}_{h_n}\nabla_{z_t}\log p_\theta(w^{(n)} \mid z_t) ,
        \qquad \sum_n h_n = 1 .
    $$
    Substituting into $\ztilt= \zhat+t^{2}\,\nabla_{z_t}\log\ell_t(z_t)$ gives \eqref{eq:reinforce}.
\end{proof}

The per-sample score required by \eqref{eq:reinforce} is itself available in closed form.

\paragraph{Cost.} Both $\zhat$ and the categorical $p_\theta(w\mid z_t,t)$ come from a single forward
pass of $\mathrm{net}_\theta$, so the $N$ draws of \propref{prop:snis} are sampled from an object
already in hand and cost no additional network evaluations. Each step is therefore one network
evaluation, $N$ decodes, and one GP prediction over $N$ molecules.

\subsection{Training CDCD}
\label{app:sec-cdcd_training}
No pretrained diffusion model was available off the shelf, so we trained one from scratch.
\tabref{tab:hyp-config-cdcd} lists the hyperparameter configuration we used. \citet{dieleman2022continuous} never released the original code, so we started from a public reimplementation \citep{sacco2024textdiffusion} and adapted it to our needs.

\begin{table}[t]
  \centering
  \caption{Hyperparameters of the CDCD generator.}
  \label{tab:hyp-config-cdcd}
  \small
  \begin{tabular}{@{}l r@{}}
    \toprule
    \textbf{Hyperparameter} & \textbf{Value} \\
    \midrule
    \multicolumn{2}{@{}l}{\textit{Architecture}} \\
    \quad Embedding dimension           & $16$  \\
    \quad Hidden dimension              & $512$ \\
    \quad Query/key/value dimension     & $512$ \\
    \quad Time embedding dimension      & $16$  \\
    \quad Layers                        & $8$   \\
    \quad Attention heads               & $8$   \\
    \addlinespace
    \multicolumn{2}{@{}l}{\textit{Conditioning}} \\
    \quad Self-conditioning probability & $0.4$ \\
    \quad Mask-conditioning probability & $0.1$ \\
    \quad Masking probability           & $1.0$ \\
    \addlinespace
    \multicolumn{2}{@{}l}{\textit{Loss}} \\
    \quad Objective                     & Focal \citep{focal} \\
    \quad \texttt{<nop>} token weight   & $1.0$ \\
    \quad \texttt{<eos>} token weight   & $1.0$ \\
    \quad \texttt{<pad>} token weight   & $1.0$ \\
    \addlinespace
    \multicolumn{2}{@{}l}{\textit{Noise schedule}} \\
    \quad Minimum time                  & $0.01$  \\
    \quad Maximum time                  & $200.0$ \\
    \quad Location                      & $3.0$   \\
    \quad Scale                         & $2.0$   \\
    \quad Height                        & $6.0$   \\
    \quad Offset                        & $-1.0$  \\
    \quad Update frequency              & $100$   \\
    \addlinespace
    \multicolumn{2}{@{}l}{\textit{\rev{Optimization}}} \\
    \quad Batch size                    & $512$              \\
    \quad Learning rate                 & $5 \times 10^{-6}$ \\
    \quad Maximum epochs                & $30$               \\
    \quad Maximum steps                 & $450{,}000$        \\
    \quad Gradient clipping             & $1.0$              \\
    \quad Gradient accumulation steps   & $1$                \\
    \addlinespace
    \multicolumn{2}{@{}l}{\textit{Sequence and sampling}} \\
    \quad Maximum sequence length       & $110$ \\
    \quad Integration steps             & $200$ \\
    \bottomrule
  \end{tabular}
\end{table}

\section{SeqFlow}
\label{app:sec-seqflow}
\algref{alg:seqflow-guided} covers the guided-sampling step when the backbone is a normalizing flow.

\begin{algorithm}[t]
\caption{\textsc{GuidedSample} --- SeqFlow. Instantiates \secref{sec:method-two} in the flow latent with
the variance-preserving schedule \rev{\eqref{eq:seqflow-kernel}}.}
\label{alg:seqflow-guided}
\begin{algorithmic}[1]
\REQUIRE $z_{\text{init}}$; frozen flow $g_\theta$ and readout $\tau$;
         schedule $\rho_T<\dots<\rho_0$; $\acq$, $\gamma$, $\lambda$, $N$, $\dataset$
\STATE $z_T \leftarrow z_{\text{init}}$
\FOR{$t = T,\dots,0$, \ $t'$ the next level}
  \STATE $\zhat \leftarrow \rho_t\,z_t$
         \COMMENT{closed form; no network eval, \eqref{eq:seqflow-posterior}}
  \STATE $z^{(n)} \sim \mathcal N\big(\zhat,(1-\rho_t^{2})\rev{I}\big)$, \ $n=1..N$
         \COMMENT{exact posterior}
  \STATE $h_n \leftarrow \softmax_n\big(\tfrac1\gamma \log \acq(\tau(g_\theta(z^{(n)}));\dataset)\big)$
         \COMMENT{Eq.~(\plaineqref{eq:power_tilt})}
  \STATE $\ztilt \leftarrow \sum_n h_n\,z^{(n)}$
         \COMMENT{\propref{prop:snis}}
  \STATE $z_{t'} \leftarrow \rho_{t'}\big(\zhat + \lambda(\ztilt - \zhat)\big)
         + \sqrt{1-\rho_{t'}^{2}}\,\xi$, \ $\xi \sim \mathcal N(0,\rev{I})$
\ENDFOR
\RETURN $z_0$, decoded as $\tau(g_\theta(z_0))$
\end{algorithmic}
\end{algorithm}

\paragraph{Instantiation.} SeqFlow supplies no noising process of its own, so the kernel of
\defref{def:p_t} is imposed rather than inherited. We work in the flow latent
$z=g_\theta^{-1}(x)$, whose law is exactly
$\mathcal N(0,\rev{I})$ while the tilt is the pullback $\ell\big(\tau(g_\theta(z))\big)$, \eqref{eq:push-forward}. Set
\begin{equation}
\label{eq:seqflow-kernel}
q_t(z_t\mid z)=\mathcal N\big(z_t;\,\rho_t\,z,\,(1-\rho_t^{2})\rev{I}\big),
\qquad \rho_t\in(0,1],
\end{equation}
i.e.\ $(\alpha_t,\sigma_t)=\big(\rho_t,\sqrt{1-\rho_t^{2}}\big)$ in \eqref{eq:kernel-z}. Two closed
forms follow. Since the latent law is $\mathcal N(0,\rev{I})$, the untilted marginal is
\begin{equation}
\label{eq:seqflow-marginal}
p_t(z_t)=\int \mathcal N\big(z_t;\rho_tz,(1-\rho_t^{2})\rev{I}\big)\,\mathcal N(z;0,\rev{I})\,dz
=\mathcal N(z_t;0,\rev{I}),
\end{equation}
so the bridge is variance preserving and every level carries the same law as the flow's own base;
and by Gaussian conjugacy the posterior of \propref{prop:tilted-marginal} is exact,
\begin{equation}
\label{eq:seqflow-posterior}
p(z\mid z_t)=\mathcal N\big(z;\,\rho_t\,z_t,\,(1-\rho_t^{2})\rev{I}\big),
\qquad \zhat = \rho_t\,z_t ,
\end{equation}
which is \eqref{eq:latent-posterior} \rev{with $\alpha_t=\rho_t$}. Neither requires a network evaluation.
\propref{prop:guidance-mean-swap} then reads
\begin{equation}
\label{eq:seqflow-guided-score}
\nabla_{z_t}\log\pstar_t(z_t)=\frac{\rho_t\,\ztilt-z_t}{1-\rho_t^{2}} .
\end{equation}
Nothing about $g_\theta$ enters any of \eqref{eq:seqflow-marginal}--\eqref{eq:seqflow-guided-score}: the
learned content of the flow is confined to the tilt, where it is evaluated pointwise by
\propref{prop:snis} and never differentiated. This is why no analogue of \propref{prop:reinforce} is
needed here --- there is no learned posterior whose score would have to be estimated.

\paragraph{Why tilting the latent is legitimate.} \defref{def:p_x0_tild} is stated over the object
that $\ell$ scores, whereas \eqref{eq:seqflow-kernel} tilts $z$. The two agree by the argument of
\appref{proof:lifting}, applied with $\tau$ replaced by the composite readout
$\tau\circ g_\theta$: writing $\Pi$ for the distribution over molecules induced by
$\pi^\star(z)\propto\mathcal N(z;0,\rev{I})\,\ell(\tau(g_\theta(z)))$ of \eqref{eq:push-forward}, and using that $\ell$ is
constant on each cell $A_w$ of \eqref{eq:seqflow-pushforward},
\begin{equation}
\label{eq:pushforward-commutes}
\Pi(w)=\frac1Z\int_{A_w}\mathcal N(z;0,\rev{I})\,\ell\big(\tau(g_\theta(z))\big)\,dz
=\frac{\ell(w)}{Z}\int_{A_w}\mathcal N(z;0,\rev{I})\,dz
=\frac{P_\theta(w)\,\ell(w)}{Z},
\end{equation}
the intended target. Tilting the latent and decoding is therefore the same distribution over
molecules as tilting the molecules directly; the constancy of $\ell$ on cells is what licenses pulling
it out of the integral.


\paragraph{What remains approximate.} Three things, none of them the posterior. First, the $N$-sample estimate of $\ztilt$ in \propref{prop:snis}, which is common to both backbones. Second, \algref{alg:seqflow-guided} substitutes the tilted posterior \emph{mean} $\ztilt$ for a \emph{draw} from $\pstar(\cdot\mid z_t)$, which is under-dispersed and mode-seeking; resampling a single $z^{(n)}$ with probability $h_n$ instead recovers a transition that is asymptotically exact in $N$, at the cost of variance. Third, the rollout is \rev{initialized} at $z_T\sim p_T=\mathcal N(0,\rev{I})$ rather than at $\pi^\star_T$, which by \propref{prop:tilted-marginal} differs from $p_T$ by the tilt $\ell_T$; the discrepancy vanishes as $\rho_T\to0$ and is the price of $\pi^\star_T$ not being samplable.

\paragraph{Cost.} Each step draws $N$ latents from \eqref{eq:seqflow-posterior} and decodes each through $g_\theta$ and $\tau$, so the flow is evaluated $N$ times per step. CDCD \rev{amortizes} a single forward pass across all $N$ draws, since \eqref{eq:cdcd-posterior} is already available from it. This is the price of the exact posterior: the flow has no \rev{amortized} categorical to sample from, and every draw must be pushed through the network to be scored. The decode is latency-bound rather than throughput-bound, however, so $N$ is nearly free and the step count is the binding constraint --- the opposite of CDCD's economics.

\subsection{Training SeqFlow}
We used the pretrained model available in the authors' repository. For details of how it was trained and which hyperparameter configuration was used, we refer to \citet{lee2025nfbo}.
\section{AFGG Algorithm}
\label{sec:afgg_algo}

\algref{alg:afgg} summarizes the complete AFGG optimization loop.
At each BO iteration, the surrogate is updated in data space, the guided sampler is initialized around the current incumbent using the trust region above, and the resulting candidate is decoded and evaluated.
The overall procedure is shared across both generative backbones; they differ only in the continuous variable in which the bridge and guidance are applied.
For CDCD, this variable is the token-embedding representation, whereas for SeqFlow it is the Gaussian flow latent.
The complete guided samplers are given in \appref{app:sec-cdcd} and \appref{app:sec-seqflow} (\algref{alg:cdcd-guided} and \algref{alg:seqflow-guided}), and \tabref{tab:easy-hard} summarizes the correspondence between the two instantiations.

\begin{algorithm}[t]
\caption{AFGG: full BO loop}
\label{alg:afgg}
\begin{algorithmic}[1]
\REQUIRE frozen generator; acquisition $\acq$; temperature $\gamma$; guidance scale $\lambda$;
inner samples $N$; batch size $q$; oracle budget $B$; initial dataset $\dataset_0$;
trust-region coefficient $\rho$; anchor scale $\sigma$
\STATE $\dataset \leftarrow \dataset_0$
\FOR{$r = 1,2,\dots,\lfloor B/q\rfloor$}
  \STATE fit GP on $\dataset$ \COMMENT{in design space, \secref{sec:bo}}
  \STATE $z^{\star} \leftarrow$ anchor from the incumbent
  \STATE $z^{(1:q)}_{\text{init}} \leftarrow \rho\,z^{\star} + \sqrt{1-\rho^{2}}\,\varepsilon$,
         \quad $\varepsilon \sim \mathcal N(0,\sigma^{2}\rev{I})$ \COMMENT{\eqref{eq:trust-region}}
  \STATE $z^{(1:q)} \leftarrow \textsc{GuidedSample} \big( z^{(1:q)}_{\text{init}} \big)$
         \COMMENT{\algref{alg:cdcd-guided} (CDCD) or \algref{alg:seqflow-guided} (SeqFlow)}
  \STATE $w^{(i)} \leftarrow \tau(z^{(i)})$ (CDCD) or $\tau\big(g_\theta(z^{(i)})\big)$ (SeqFlow), \ $i=1..q$
         \COMMENT{decode to design space}
  \STATE $y^{(i)} \leftarrow$ evaluate oracle on $w^{(i)}$, \ $i=1..q$
  \STATE $\dataset \leftarrow \dataset \cup \{(w^{(i)},y^{(i)})\}_{i=1}^{q}$
\ENDFOR
\RETURN $\argmax_{(w,y)\in\dataset} y$
\end{algorithmic}
\end{algorithm}

\section{Additional \rev{Material}}

\subsection{Additional \rev{Results} on GuacaMol}
\label{sec:additional_results}
This section reports the full benchmark behind \secref{sec:experiments}: all 25 PMO objectives, in the sequential regime and in both batch regimes, as final scores and as complete trajectories. Objective names are abbreviated as in \tabref{tab:obj-abbrev}.

\begin{table}[t]
\centering
\small
\begin{tabular}{@{}ll@{\hspace{2em}}ll@{}}
\toprule
Abbreviation & PMO objective & Abbreviation & PMO objective \\
\midrule
\texttt{albu\_sim} & \texttt{albuterol\_similarity} & \texttt{osim\_mpo} & \texttt{\rev{osimertinib}\_mpo} \\
\texttt{amlo\_mpo} & \texttt{amlodipine\_mpo} & \texttt{peri\_mpo} & \texttt{perindopril\_mpo} \\
\texttt{cele\_red} & \texttt{celecoxib\_rediscovery} & \texttt{rano\_mpo} & \texttt{ranolazine\_mpo} \\
\texttt{deco\_hop} & \texttt{deco\_hop} & \texttt{logp} & \texttt{rdkit\_logp} \\
\texttt{drd2} & \texttt{drd2\_docking} & \texttt{qed} & \texttt{rdkit\_qed} \\
\texttt{fexo\_mpo} & \texttt{fexofenadine\_mpo} & \texttt{sa} & \texttt{sa\_tdc} \\
\texttt{gsk3b} & \texttt{gsk3\_beta} & \texttt{scaf\_hop} & \texttt{scaffold\_hop} \\
\texttt{iso\_c7} & \texttt{isomer\_c7h8n2o2} & \texttt{sita\_mpo} & \texttt{sitagliptin\_mpo} \\
\texttt{iso\_c9} & \texttt{isomer\_c9h10n2o2pf2cl} & \texttt{thio\_red} & \texttt{thiothixene\_rediscovery} \\
\texttt{jnk3} & \texttt{jnk3} & \texttt{trog\_red} & \texttt{troglitazone\_rediscovery} \\
\texttt{med\_1} & \texttt{median\_1} & \texttt{vals\_sma} & \texttt{valsartan\_smarts} \\
\texttt{med\_2} & \texttt{median\_2} & \texttt{zale\_mpo} & \texttt{zaleplon\_mpo} \\
\texttt{mest\_sim} & \texttt{mestranol\_similarity} &  &  \\
\bottomrule
\end{tabular}
\caption{Abbreviations of the 25 PMO objectives, used in all tables.}
\label{tab:obj-abbrev}
\end{table}

\rev{All available generative-model checkpoints of the baselines were trained on GuacaMol; COWBOYS provides no checkpoint, so we retrain it on GuacaMol from the released code.}

One accounting point applies throughout. CoBO \citep{lee2023cobo} and LOL-BO \citep{maus2022local} spend oracle calls realigning a latent with the molecule it decodes to, and their released code does not charge those calls to the budget. Following \citet{chu2024inversion} we modified both to count them, so every method below is compared at the same number of oracle evaluations.

\figref{fig:25-obj-seq} and \rev{\tabref{tab:afgg-all25-vs-benchmarks_full}} give the sequential regime ($q=1$, 10 initial points, 300 oracle calls) against all eleven baselines, of which \tabref{tab:afgg-all25-vs-benchmarks_partial} in the main text shows five. Each cell is judged against the opposing group, so a bold AFGG cell beats every baseline on that objective and a bold baseline cell beats both AFGG columns; green marks a lead exceeding the cell's own standard deviation. Two objectives are worth reading separately from the rest. \texttt{logp} and \texttt{sa} are not scaled to be in $[0,1]$, and reward molecules that are large or synthetically implausible, so a high score there measures how far a method will leave the training distribution rather than how well it \rev{optimizes}; the LSBO baselines lead on both. 

\begin{figure}[t]
    \centering
    \includegraphics[width=\textwidth]{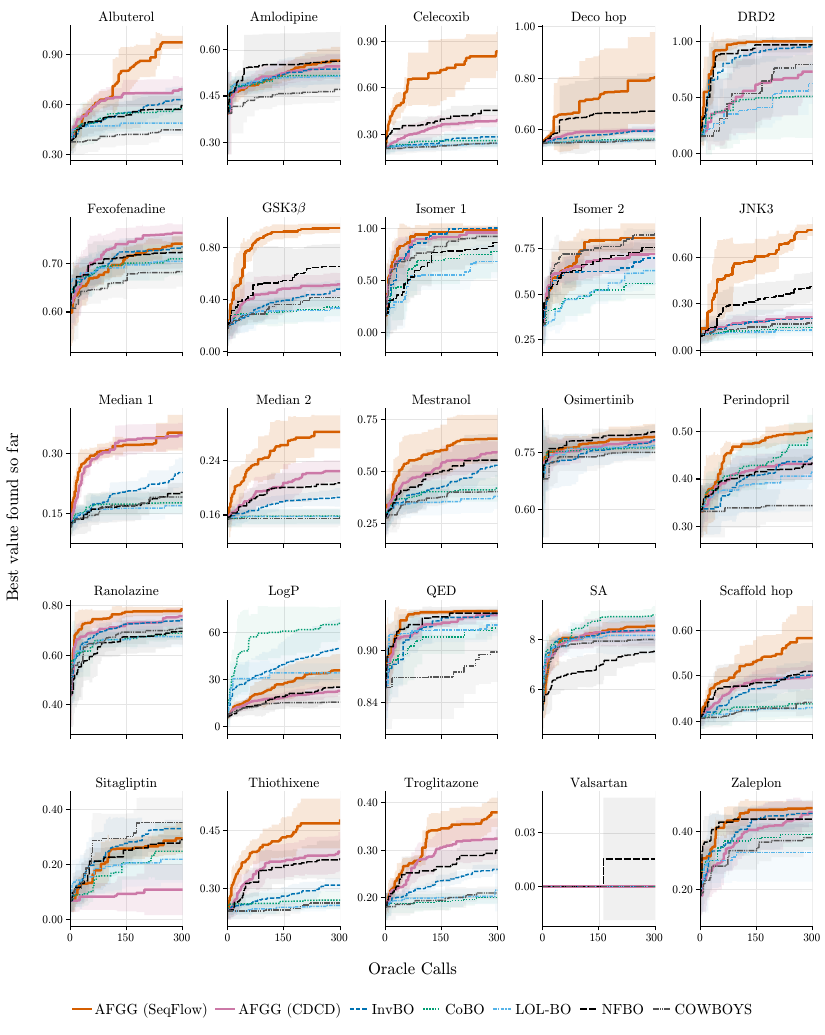}
    \caption{\textbf{Sequential setting}: full trajectories on all 25 PMO objectives (10 initial  points, 300 oracle calls, 1 point per BO iteration).}
    \label{fig:25-obj-seq}
\end{figure}

\begin{table}[t]
\centering
\small
\resizebox{\textwidth}{!}{%
\begin{tabular}{lrrrrrrrrrrrrr}
\toprule
 & \multicolumn{11}{c}{baselines} & \multicolumn{2}{c}{AFGG} \\
\cmidrule(lr){2-12}\cmidrule(lr){13-14}
Objective & CoBO & InvBO & LOL-BO & NFBO & CMA-ES & COWBOYS & DE & GA & HC & RLBO & TuRBO & CDCD & SeqFlow \\
\midrule
\texttt{albu\_sim} & 0.593 $\pm$ 0.111 & 0.629 $\pm$ 0.079 & 0.509 $\pm$ 0.065 & 0.593 $\pm$ 0.119 & 0.500 $\pm$ 0.030 & 0.448 $\pm$ 0.037 & 0.488 $\pm$ 0.035 & 0.385 $\pm$ 0.074 & 0.454 $\pm$ 0.070 & 0.483 $\pm$ 0.091 & 0.475 $\pm$ 0.047 & \textbf{0.691 $\pm$ 0.073} & \cellcolor[HTML]{D6F0D6}\textbf{0.975 $\pm$ 0.038} \\
\texttt{amlo\_mpo} & 0.523 $\pm$ 0.012 & 0.538 $\pm$ 0.050 & 0.516 $\pm$ 0.022 & 0.563 $\pm$ 0.095 & 0.459 $\pm$ 0.038 & 0.472 $\pm$ 0.038 & 0.438 $\pm$ 0.143 & 0.419 $\pm$ 0.082 & 0.460 $\pm$ 0.048 & 0.491 $\pm$ 0.030 & 0.475 $\pm$ 0.024 & 0.546 $\pm$ 0.039 & \textbf{0.565 $\pm$ 0.045} \\
\texttt{cele\_red} & 0.262 $\pm$ 0.041 & 0.286 $\pm$ 0.068 & 0.243 $\pm$ 0.035 & 0.455 $\pm$ 0.035 & 0.227 $\pm$ 0.032 & 0.243 $\pm$ 0.020 & 0.216 $\pm$ 0.036 & 0.210 $\pm$ 0.040 & 0.228 $\pm$ 0.032 & 0.224 $\pm$ 0.031 & 0.221 $\pm$ 0.033 & 0.392 $\pm$ 0.024 & \cellcolor[HTML]{D6F0D6}\textbf{0.836 $\pm$ 0.125} \\
\texttt{deco\_hop} & 0.565 $\pm$ 0.008 & 0.597 $\pm$ 0.019 & 0.560 $\pm$ 0.009 & 0.673 $\pm$ 0.147 & 0.563 $\pm$ 0.010 & 0.559 $\pm$ 0.008 & 0.571 $\pm$ 0.018 & 0.549 $\pm$ 0.015 & 0.553 $\pm$ 0.012 & 0.556 $\pm$ 0.015 & 0.552 $\pm$ 0.012 & 0.599 $\pm$ 0.010 & \textbf{0.803 $\pm$ 0.175} \\
\texttt{drd2} & 0.510 $\pm$ 0.370 & 0.963 $\pm$ 0.037 & 0.626 $\pm$ 0.184 & 0.971 $\pm$ 0.066 & 0.184 $\pm$ 0.068 & 0.796 $\pm$ 0.243 & 0.862 $\pm$ 0.156 & 0.154 $\pm$ 0.051 & 0.183 $\pm$ 0.045 & 0.380 $\pm$ 0.246 & 0.556 $\pm$ 0.337 & 0.730 $\pm$ 0.226 & \cellcolor[HTML]{D6F0D6}\textbf{1.000 $\pm$ 0.000} \\
\texttt{fexo\_mpo} & 0.710 $\pm$ 0.034 & 0.734 $\pm$ 0.027 & 0.716 $\pm$ 0.021 & 0.723 $\pm$ 0.032 & 0.672 $\pm$ 0.014 & 0.683 $\pm$ 0.031 & 0.724 $\pm$ 0.045 & 0.611 $\pm$ 0.063 & 0.680 $\pm$ 0.033 & 0.682 $\pm$ 0.034 & 0.688 $\pm$ 0.026 & \cellcolor[HTML]{D6F0D6}\textbf{0.763 $\pm$ 0.017} & \textbf{0.741 $\pm$ 0.016} \\
\texttt{gsk3b} & 0.362 $\pm$ 0.069 & 0.500 $\pm$ 0.042 & 0.336 $\pm$ 0.103 & 0.656 $\pm$ 0.174 & 0.371 $\pm$ 0.042 & 0.418 $\pm$ 0.093 & 0.478 $\pm$ 0.108 & 0.248 $\pm$ 0.081 & 0.320 $\pm$ 0.047 & 0.343 $\pm$ 0.084 & 0.328 $\pm$ 0.047 & 0.518 $\pm$ 0.088 &\cellcolor[HTML]{D6F0D6}\textbf{0.950 $\pm$ 0.037}  \\
\texttt{iso\_c7} & 0.875 $\pm$ 0.145 & \cellcolor[HTML]{D6F0D6}\textbf{1.000 $\pm$ 0.000} & 0.760 $\pm$ 0.036 & 0.860 $\pm$ 0.092 & 0.972 $\pm$ 0.051 & 0.919 $\pm$ 0.046 & 0.808 $\pm$ 0.051 & 0.808 $\pm$ 0.051 & 0.895 $\pm$ 0.090 & 0.898 $\pm$ 0.072 & 0.898 $\pm$ 0.072 & 0.957 $\pm$ 0.049 & 0.981 $\pm$ 0.043 \\
\texttt{iso\_c9} & 0.577 $\pm$ 0.050 & 0.699 $\pm$ 0.147 & 0.630 $\pm$ 0.082 & 0.758 $\pm$ 0.024 & 0.751 $\pm$ 0.074 & \cellcolor[HTML]{D6F0D6}\textbf{0.834 $\pm$ 0.020} & 0.672 $\pm$ 0.146 & 0.643 $\pm$ 0.103 & 0.687 $\pm$ 0.106 & 0.714 $\pm$ 0.035 & 0.778 $\pm$ 0.049 & 0.722 $\pm$ 0.079 & 0.809 $\pm$ 0.081 \\
\texttt{jnk3} & 0.148 $\pm$ 0.031 & 0.206 $\pm$ 0.063 & 0.144 $\pm$ 0.035 & 0.412 $\pm$ 0.095 & 0.150 $\pm$ 0.025 & 0.176 $\pm$ 0.032 & 0.190 $\pm$ 0.053 & 0.142 $\pm$ 0.029 & 0.144 $\pm$ 0.025 & 0.150 $\pm$ 0.049 & 0.158 $\pm$ 0.029 & 0.216 $\pm$ 0.033 & \cellcolor[HTML]{D6F0D6}\textbf{0.778 $\pm$ 0.038}\\
\texttt{med\_1} & 0.178 $\pm$ 0.025 & 0.253 $\pm$ 0.037 & 0.175 $\pm$ 0.023 & 0.203 $\pm$ 0.024 & 0.146 $\pm$ 0.012 & 0.192 $\pm$ 0.024 & 0.179 $\pm$ 0.038 & 0.152 $\pm$ 0.042 & 0.154 $\pm$ 0.015 & 0.150 $\pm$ 0.023 & 0.181 $\pm$ 0.030 & \cellcolor[HTML]{D6F0D6}\textbf{0.346 $\pm$ 0.031} & \cellcolor[HTML]{D6F0D6}\textbf{0.352 $\pm$ 0.045} \\
\texttt{med\_2} & 0.158 $\pm$ 0.011 & 0.186 $\pm$ 0.014 & 0.158 $\pm$ 0.011 & 0.207 $\pm$ 0.018 & 0.155 $\pm$ 0.013 & 0.156 $\pm$ 0.012 & 0.168 $\pm$ 0.017 & 0.155 $\pm$ 0.013 & 0.155 $\pm$ 0.013 & 0.155 $\pm$ 0.013 & 0.155 $\pm$ 0.013 & \cellcolor[HTML]{D6F0D6}\textbf{0.225 $\pm$ 0.016} & \cellcolor[HTML]{D6F0D6}\textbf{0.283 $\pm$ 0.024} \\
\texttt{mest\_sim} & 0.421 $\pm$ 0.064 & 0.531 $\pm$ 0.054 & 0.384 $\pm$ 0.059 & 0.555 $\pm$ 0.125 & 0.362 $\pm$ 0.062 & 0.404 $\pm$ 0.023 & 0.382 $\pm$ 0.104 & 0.280 $\pm$ 0.090 & 0.358 $\pm$ 0.058 & 0.372 $\pm$ 0.046 & 0.395 $\pm$ 0.057 & \textbf{0.593 $\pm$ 0.051} & \textbf{0.658 $\pm$ 0.114} \\
\texttt{osim\_mpo} & 0.761 $\pm$ 0.013 & 0.782 $\pm$ 0.024 & 0.767 $\pm$ 0.015 & \textbf{0.803 $\pm$ 0.019} & 0.750 $\pm$ 0.020 & 0.749 $\pm$ 0.023 & 0.756 $\pm$ 0.067 & 0.724 $\pm$ 0.060 & 0.744 $\pm$ 0.028 & 0.749 $\pm$ 0.027 & 0.748 $\pm$ 0.022 & 0.778 $\pm$ 0.023 & 0.789 $\pm$ 0.034 \\
\texttt{peri\_mpo} & 0.487 $\pm$ 0.049 & 0.447 $\pm$ 0.054 & 0.416 $\pm$ 0.079 & 0.434 $\pm$ 0.015 & 0.345 $\pm$ 0.053 & 0.344 $\pm$ 0.034 & 0.355 $\pm$ 0.067 & 0.332 $\pm$ 0.055 & 0.344 $\pm$ 0.034 & 0.356 $\pm$ 0.023 & 0.342 $\pm$ 0.038 & 0.432 $\pm$ 0.031 & \textbf{0.502 $\pm$ 0.017} \\
\texttt{rano\_mpo} & 0.688 $\pm$ 0.027 & 0.742 $\pm$ 0.033 & 0.684 $\pm$ 0.042 & 0.696 $\pm$ 0.053 & 0.690 $\pm$ 0.071 & 0.708 $\pm$ 0.040 & 0.729 $\pm$ 0.068 & 0.484 $\pm$ 0.081 & 0.710 $\pm$ 0.043 & 0.672 $\pm$ 0.103 & 0.728 $\pm$ 0.025 & \textbf{0.757 $\pm$ 0.034} & \cellcolor[HTML]{D6F0D6}\textbf{0.785 $\pm$ 0.007} \\
\texttt{logp} & \cellcolor[HTML]{D6F0D6}\textbf{65.866 $\pm$ 10.720} & \textbf{49.799 $\pm$ 19.126} & 34.741 $\pm$ 5.318 & 25.055 $\pm$ 8.705 & 13.285 $\pm$ 1.014 & 15.443 $\pm$ 3.623 & 7.589 $\pm$ 3.102 & 6.138 $\pm$ 0.827 & 11.939 $\pm$ 1.801 & 10.491 $\pm$ 2.039 & 12.128 $\pm$ 1.918 & 22.684 $\pm$ 2.667 & 35.734 $\pm$ 10.360 \\
\texttt{qed} & 0.928 $\pm$ 0.004 & 0.942 $\pm$ 0.006 & 0.934 $\pm$ 0.003 & 0.943 $\pm$ 0.008 & 0.888 $\pm$ 0.032 & 0.898 $\pm$ 0.034 & 0.888 $\pm$ 0.034 & 0.857 $\pm$ 0.047 & 0.889 $\pm$ 0.038 & 0.882 $\pm$ 0.045 & 0.896 $\pm$ 0.021 & 0.940 $\pm$ 0.006 & \cellcolor[HTML]{D6F0D6}\textbf{0.945 $\pm$ 0.002} \\
\texttt{sa} & \cellcolor[HTML]{D6F0D6}\textbf{9.001 $\pm$ 0.347} & 8.393 $\pm$ 0.543 & 8.214 $\pm$ 0.711 & 7.538 $\pm$ 0.452 & 8.035 $\pm$ 0.069 & 8.020 $\pm$ 0.363 & 8.492 $\pm$ 0.742 & \cellcolor[HTML]{D6F0D6}\textbf{8.853 $\pm$ 0.151} & 8.053 $\pm$ 0.225 & 8.035 $\pm$ 0.128 & 7.971 $\pm$ 0.035 & 8.349 $\pm$ 0.606 & 8.560 $\pm$ 0.220 \\
\texttt{scaf\_hop} & 0.441 $\pm$ 0.029 & 0.501 $\pm$ 0.030 & 0.430 $\pm$ 0.020 & 0.510 $\pm$ 0.062 & 0.431 $\pm$ 0.008 & 0.442 $\pm$ 0.027 & 0.441 $\pm$ 0.027 & 0.407 $\pm$ 0.023 & 0.413 $\pm$ 0.018 & 0.417 $\pm$ 0.027 & 0.427 $\pm$ 0.029 & 0.500 $\pm$ 0.028 & \cellcolor[HTML]{D6F0D6}\textbf{0.583 $\pm$ 0.070} \\
\texttt{sita\_mpo} & 0.255 $\pm$ 0.096 & \cellcolor[HTML]{D6F0D6}\textbf{0.330 $\pm$ 0.032} & 0.228 $\pm$ 0.087 & 0.290 $\pm$ 0.030 & \cellcolor[HTML]{D6F0D6}\textbf{0.367 $\pm$ 0.037} & \textbf{0.352 $\pm$ 0.090} & 0.275 $\pm$ 0.095 & 0.094 $\pm$ 0.050 & \cellcolor[HTML]{D6F0D6}\textbf{0.400 $\pm$ 0.101} & \textbf{0.319 $\pm$ 0.118} & 0.275 $\pm$ 0.063 & 0.108 $\pm$ 0.092 & 0.295 $\pm$ 0.056 \\
\texttt{thio\_red} & 0.270 $\pm$ 0.025 & 0.308 $\pm$ 0.017 & 0.255 $\pm$ 0.011 & 0.376 $\pm$ 0.053 & 0.250 $\pm$ 0.015 & 0.261 $\pm$ 0.010 & 0.272 $\pm$ 0.039 & 0.240 $\pm$ 0.021 & 0.241 $\pm$ 0.019 & 0.245 $\pm$ 0.014 & 0.250 $\pm$ 0.014 & \textbf{0.395 $\pm$ 0.051} & \cellcolor[HTML]{D6F0D6}\textbf{0.475 $\pm$ 0.060} \\
\texttt{trog\_red} & 0.205 $\pm$ 0.004 & 0.260 $\pm$ 0.010 & 0.218 $\pm$ 0.044 & 0.300 $\pm$ 0.044 & 0.206 $\pm$ 0.017 & 0.210 $\pm$ 0.014 & 0.205 $\pm$ 0.033 & 0.182 $\pm$ 0.028 & 0.194 $\pm$ 0.016 & 0.196 $\pm$ 0.016 & 0.190 $\pm$ 0.017 & \textbf{0.324 $\pm$ 0.043} & \cellcolor[HTML]{D6F0D6}\textbf{0.380 $\pm$ 0.031} \\
\texttt{vals\_sma} & 0.000 $\pm$ 0.000 & 0.000 $\pm$ 0.000 & 0.000 $\pm$ 0.000 & \textbf{0.015 $\pm$ 0.034} & 0.000 $\pm$ 0.000 & 0.000 $\pm$ 0.000 & 0.000 $\pm$ 0.000 & 0.000 $\pm$ 0.000 & 0.000 $\pm$ 0.000 & 0.000 $\pm$ 0.000 & 0.000 $\pm$ 0.000 & 0.000 $\pm$ 0.000 & 0.000 $\pm$ 0.000 \\
\texttt{zale\_mpo} & 0.404 $\pm$ 0.055 & 0.461 $\pm$ 0.047 & 0.348 $\pm$ 0.065 & 0.442 $\pm$ 0.042 & 0.390 $\pm$ 0.037 & 0.378 $\pm$ 0.044 & 0.361 $\pm$ 0.123 & 0.191 $\pm$ 0.068 & 0.365 $\pm$ 0.037 & 0.357 $\pm$ 0.083 & 0.355 $\pm$ 0.041 & \textbf{0.466 $\pm$ 0.037} & \textbf{0.480 $\pm$ 0.026} \\
\bottomrule
\end{tabular}}
\caption{Final best score on all 25 PMO objectives after 300 oracle calls (10 initial points, 1 point per BO iteration), mean $\pm$ sample standard deviation over 5 seeds. AFGG is shown in two instantiations: \textbf{CDCD} and \textbf{SeqFlow}. Every cell is judged against the opposing group: an AFGG cell against the 11 baselines, a baseline cell against the two AFGG columns. \textbf{Bold}: the cell's mean exceeds every mean in the opposing group. \colorbox[HTML]{D6F0D6}{Green}: that lead is significant, exceeding the cell's own standard deviation; bold without green therefore means ahead but within noise.}
\label{tab:afgg-all25-vs-benchmarks_full}
\end{table}

The batch regime (100 initial points, then 100 epochs of $\rev{q}=5$) admits two ways of turning a per-candidate tilt into a batch, differing in where the interaction between members enters. 
\emph{Uncoupled} generation guides each trajectory by its own $\ell=\alpha^{\mathrm{EI}}$ and lets the points ``meet'' only at selection, where the $\rev{q}$ points are chosen from the generated pool by joint qLogEI in design space. \emph{Coupled} generation uses one joint tilt per batch, $\ell$ being qLogEI evaluated on the partial batch, so each member is generated aware of those already fixed. Coupled generation is the more faithful instantiation of batch BO --- it is the one that targets the joint acquisition the batch is eventually scored by --- but it is not the one that performs better here, so we report both. \figref{fig:25-obj-batch-uncoupled} and \tabref{tab:afgg-batch-uncoupled} give the uncoupled variant, the one used in \secref{sec:experiments}; \figref{fig:25-obj-batch-coupled} and \tabref{tab:afgg-batch-coupled} the coupled one. The baseline columns are the same runs in both tables, since only the AFGG variants differ.

\begin{figure}[t]
    \centering
    \includegraphics[width=\textwidth]{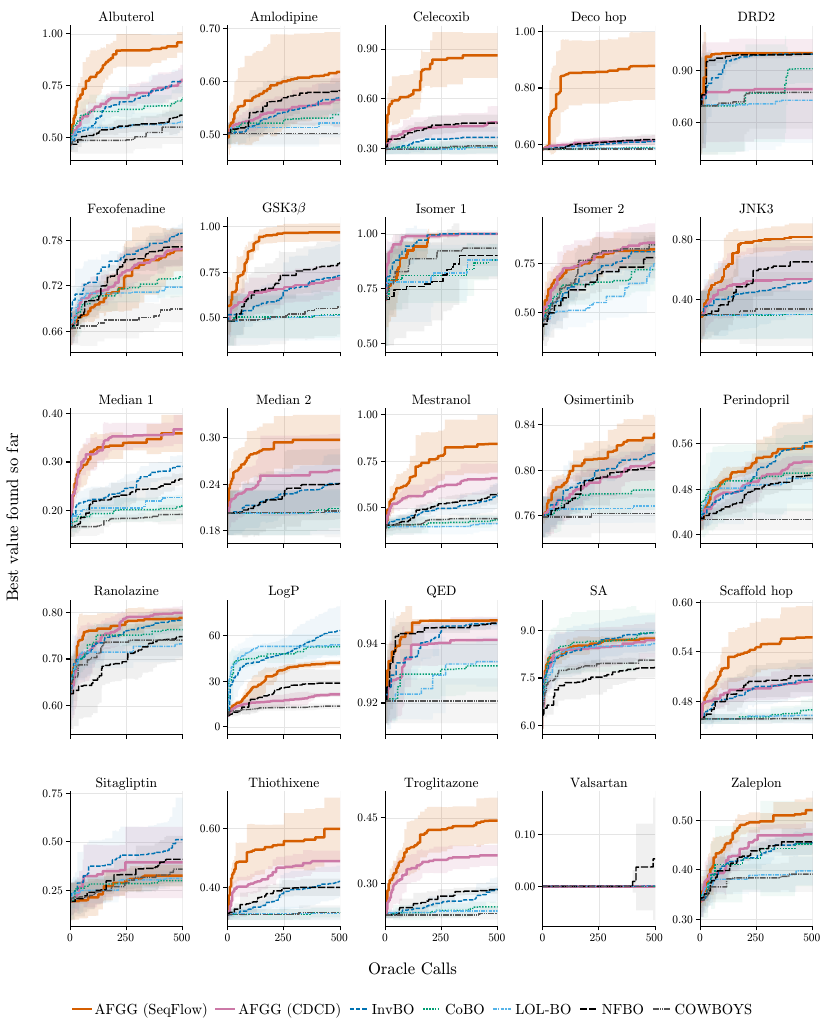}
    \caption{\textbf{Batch regime, uncoupled generation}: 500 acquisitions (100 initial points, then 100 epochs of 5), all 25 objectives. }
    \label{fig:25-obj-batch-uncoupled}
\end{figure}

\begin{table}[t]
\centering
\small
\resizebox{\textwidth}{!}{%
\begin{tabular}{lrrrrrrr}
\toprule
 & \multicolumn{5}{c}{baselines} & \multicolumn{2}{c}{AFGG} \\
\cmidrule(lr){2-6}\cmidrule(lr){7-8}
Objective & CoBO & InvBO & LOL-BO & NFBO & COWBOYS & CDCD & SeqFlow \\
\midrule
\texttt{albu\_sim} & 0.692 $\pm$ 0.095 & 0.771 $\pm$ 0.057 & 0.579 $\pm$ 0.043 & 0.609 $\pm$ 0.073 & 0.551 $\pm$ 0.081 & \textbf{0.779 $\pm$ 0.073} & \cellcolor[HTML]{D6F0D6}\textbf{0.960 $\pm$ 0.050} \\
\texttt{amlo\_mpo} & 0.540 $\pm$ 0.021 & 0.569 $\pm$ 0.029 & 0.522 $\pm$ 0.026 & 0.582 $\pm$ 0.024 & 0.501 $\pm$ 0.021 & 0.565 $\pm$ 0.052 & \textbf{0.618 $\pm$ 0.075} \\
\texttt{cele\_red} & 0.307 $\pm$ 0.035 & 0.366 $\pm$ 0.064 & 0.304 $\pm$ 0.036 & 0.452 $\pm$ 0.052 & 0.316 $\pm$ 0.052 & \textbf{0.457 $\pm$ 0.098} & \cellcolor[HTML]{D6F0D6}\textbf{0.861 $\pm$ 0.135} \\
\texttt{deco\_hop} & 0.588 $\pm$ 0.005 & 0.610 $\pm$ 0.007 & 0.586 $\pm$ 0.004 & 0.617 $\pm$ 0.018 & 0.583 $\pm$ 0.004 & 0.614 $\pm$ 0.019 & \cellcolor[HTML]{D6F0D6}\textbf{0.878 $\pm$ 0.120} \\
\texttt{drd2} & 0.930 $\pm$ 0.090 & 0.996 $\pm$ 0.006 & 0.728 $\pm$ 0.247 & 0.994 $\pm$ 0.012 & 0.774 $\pm$ 0.210 & 0.792 $\pm$ 0.288 & \cellcolor[HTML]{D6F0D6}\textbf{1.000 $\pm$ 0.000} \\
\texttt{fexo\_mpo} & 0.732 $\pm$ 0.025 & \cellcolor[HTML]{D6F0D6}\textbf{0.789 $\pm$ 0.013} & 0.718 $\pm$ 0.010 & \textbf{0.771 $\pm$ 0.025} & 0.689 $\pm$ 0.017 & 0.768 $\pm$ 0.022 & 0.767 $\pm$ 0.028 \\
\texttt{gsk3b} & 0.516 $\pm$ 0.125 & 0.730 $\pm$ 0.174 & 0.512 $\pm$ 0.138 & 0.802 $\pm$ 0.120 & 0.558 $\pm$ 0.151 & 0.714 $\pm$ 0.073 & \cellcolor[HTML]{D6F0D6}\textbf{0.968 $\pm$ 0.050} \\
\texttt{iso\_c7} & 0.880 $\pm$ 0.074 & 1.000 $\pm$ 0.000 & 0.880 $\pm$ 0.074 & 0.901 $\pm$ 0.107 & 0.935 $\pm$ 0.081 & 1.000 $\pm$ 0.000 & 1.000 $\pm$ 0.000 \\
\texttt{iso\_c9} & 0.718 $\pm$ 0.110 & 0.808 $\pm$ 0.100 & 0.758 $\pm$ 0.074 & 0.778 $\pm$ 0.104 & 0.843 $\pm$ 0.051 & \textbf{0.855 $\pm$ 0.097} & 0.820 $\pm$ 0.087 \\
\texttt{jnk3} & 0.300 $\pm$ 0.161 & 0.524 $\pm$ 0.211 & 0.298 $\pm$ 0.161 & 0.652 $\pm$ 0.128 & 0.336 $\pm$ 0.248 & 0.536 $\pm$ 0.222 & \cellcolor[HTML]{D6F0D6}\textbf{0.820  $\pm$0.091} \\
\texttt{med\_1} & 0.212 $\pm$ 0.026 & 0.291 $\pm$ 0.023 & 0.228 $\pm$ 0.022 & 0.265 $\pm$ 0.012 & 0.192 $\pm$ 0.020 & \cellcolor[HTML]{D6F0D6}\textbf{0.367 $\pm$ 0.030} & \cellcolor[HTML]{D6F0D6}\textbf{0.359 $\pm$ 0.038} \\
\texttt{med\_2} & 0.211 $\pm$ 0.024 & 0.241 $\pm$ 0.037 & 0.204 $\pm$ 0.028 & 0.241 $\pm$ 0.045 & 0.205 $\pm$ 0.034 & \textbf{0.258 $\pm$ 0.047} & \cellcolor[HTML]{D6F0D6}\textbf{0.297 $\pm$ 0.032} \\
\texttt{mest\_sim} & 0.439 $\pm$ 0.020 & 0.565 $\pm$ 0.068 & 0.428 $\pm$ 0.038 & 0.577 $\pm$ 0.052 & 0.443 $\pm$ 0.028 & \cellcolor[HTML]{D6F0D6}\textbf{0.660 $\pm$ 0.077} & \cellcolor[HTML]{D6F0D6}\textbf{0.844 $\pm$ 0.154} \\
\texttt{osim\_mpo} & 0.786 $\pm$ 0.019 & 0.815 $\pm$ 0.009 & 0.768 $\pm$ 0.015 & 0.803 $\pm$ 0.027 & 0.762 $\pm$ 0.017 & 0.807 $\pm$ 0.016 & \cellcolor[HTML]{D6F0D6}\textbf{0.832 $\pm$ 0.015} \\
\texttt{peri\_mpo} & 0.509 $\pm$ 0.069 & \textbf{0.567 $\pm$ 0.041} & 0.499 $\pm$ 0.044 & 0.504 $\pm$ 0.012 & 0.427 $\pm$ 0.019 & 0.528 $\pm$ 0.010 & 0.555 $\pm$ 0.055 \\
\texttt{rano\_mpo} & 0.763 $\pm$ 0.031 & 0.784 $\pm$ 0.025 & 0.732 $\pm$ 0.028 & 0.748 $\pm$ 0.051 & 0.740 $\pm$ 0.033 & \cellcolor[HTML]{D6F0D6}\textbf{0.799 $\pm$ 0.014} & \textbf{0.788 $\pm$ 0.018} \\
\texttt{logp} & \cellcolor[HTML]{D6F0D6}\textbf{52.790 $\pm$ 5.679} & \cellcolor[HTML]{D6F0D6}\textbf{63.296 $\pm$ 16.073} & \cellcolor[HTML]{D6F0D6}\textbf{53.962 $\pm$ 8.208} & 29.014 $\pm$ 8.537 & 13.554 $\pm$ 1.710 & 21.420 $\pm$ 1.219 & 42.342 $\pm$ 1.350 \\
\texttt{qed} & 0.934 $\pm$ 0.008 & 0.947 $\pm$ 0.002 & 0.934 $\pm$ 0.008 & 0.947 $\pm$ 0.001 & 0.921 $\pm$ 0.008 & 0.941 $\pm$ 0.008 & \cellcolor[HTML]{D6F0D6}\textbf{0.948 $\pm$ 0.000} \\
\texttt{sa} & \textbf{8.952 $\pm$ 0.668} & \textbf{8.947 $\pm$ 0.581} & 8.590 $\pm$ 0.473 & 7.841 $\pm$ 0.310 & 8.066 $\pm$ 0.225 & 8.703 $\pm$ 0.557 & 8.752 $\pm$ 0.083 \\
\texttt{scaf\_hop} & 0.479 $\pm$ 0.027 & 0.506 $\pm$ 0.021 & 0.463 $\pm$ 0.010 & 0.511 $\pm$ 0.020 & 0.459 $\pm$ 0.005 & 0.504 $\pm$ 0.018 & \cellcolor[HTML]{D6F0D6}\textbf{0.557 $\pm$ 0.038} \\
\texttt{sita\_mpo} & 0.310 $\pm$ 0.036 & \textbf{0.512 $\pm$ 0.216} & 0.319 $\pm$ 0.032 & \textbf{0.410 $\pm$ 0.121} & 0.360 $\pm$ 0.047 & 0.394 $\pm$ 0.183 & 0.327 $\pm$ 0.056 \\
\texttt{thio\_red} & 0.327 $\pm$ 0.035 & 0.420 $\pm$ 0.053 & 0.311 $\pm$ 0.022 & 0.400 $\pm$ 0.030 & 0.313 $\pm$ 0.024 & \cellcolor[HTML]{D6F0D6}\textbf{0.489 $\pm$ 0.036} & \cellcolor[HTML]{D6F0D6}\textbf{0.599 $\pm$ 0.107} \\
\texttt{trog\_red} & 0.250 $\pm$ 0.009 & 0.287 $\pm$ 0.028 & 0.237 $\pm$ 0.008 & 0.286 $\pm$ 0.009 & 0.232 $\pm$ 0.014 & \cellcolor[HTML]{D6F0D6}\textbf{0.364 $\pm$ 0.024} & \cellcolor[HTML]{D6F0D6}\textbf{0.443 $\pm$ 0.052} \\
\texttt{vals\_sma} & 0.000 $\pm$ 0.000 & \textbf{0.001 $\pm$ 0.002} & 0.000 $\pm$ 0.000 & \textbf{0.053 $\pm$ 0.118} & 0.000 $\pm$ 0.000 & 0.000 $\pm$ 0.000 & 0.000 $\pm$ 0.000 \\
\texttt{zale\_mpo} & 0.454 $\pm$ 0.083 & 0.455 $\pm$ 0.022 & 0.401 $\pm$ 0.027 & 0.457 $\pm$ 0.027 & 0.391 $\pm$ 0.036 & \textbf{0.472 $\pm$ 0.023} & \cellcolor[HTML]{D6F0D6}\textbf{0.521 $\pm$ 0.025} \\
\bottomrule
\end{tabular}}
\caption{\textbf{Batch regime, uncoupled generation.} Final best score on all 25 PMO objectives after 500 acquisitions (100 initial points, then 100 epochs of 5), mean $\pm$ sample standard deviation over 5 seeds. AFGG generates each batch \emph{uncoupled} here: every molecule is guided by its own acquisition value, and the batch only forms at selection. Every cell is judged against the opposing group: an AFGG cell against the five baselines, a baseline cell against the two AFGG columns. \textbf{Bold}: the cell's mean exceeds every mean in the opposing group. \colorbox[HTML]{D6F0D6}{Green}: that lead is significant, exceeding the cell's own standard deviation; bold without green therefore means ahead but within noise.}

\label{tab:afgg-batch-uncoupled}
\end{table}

\begin{figure}[t]
    \centering
    \includegraphics[width=\textwidth]{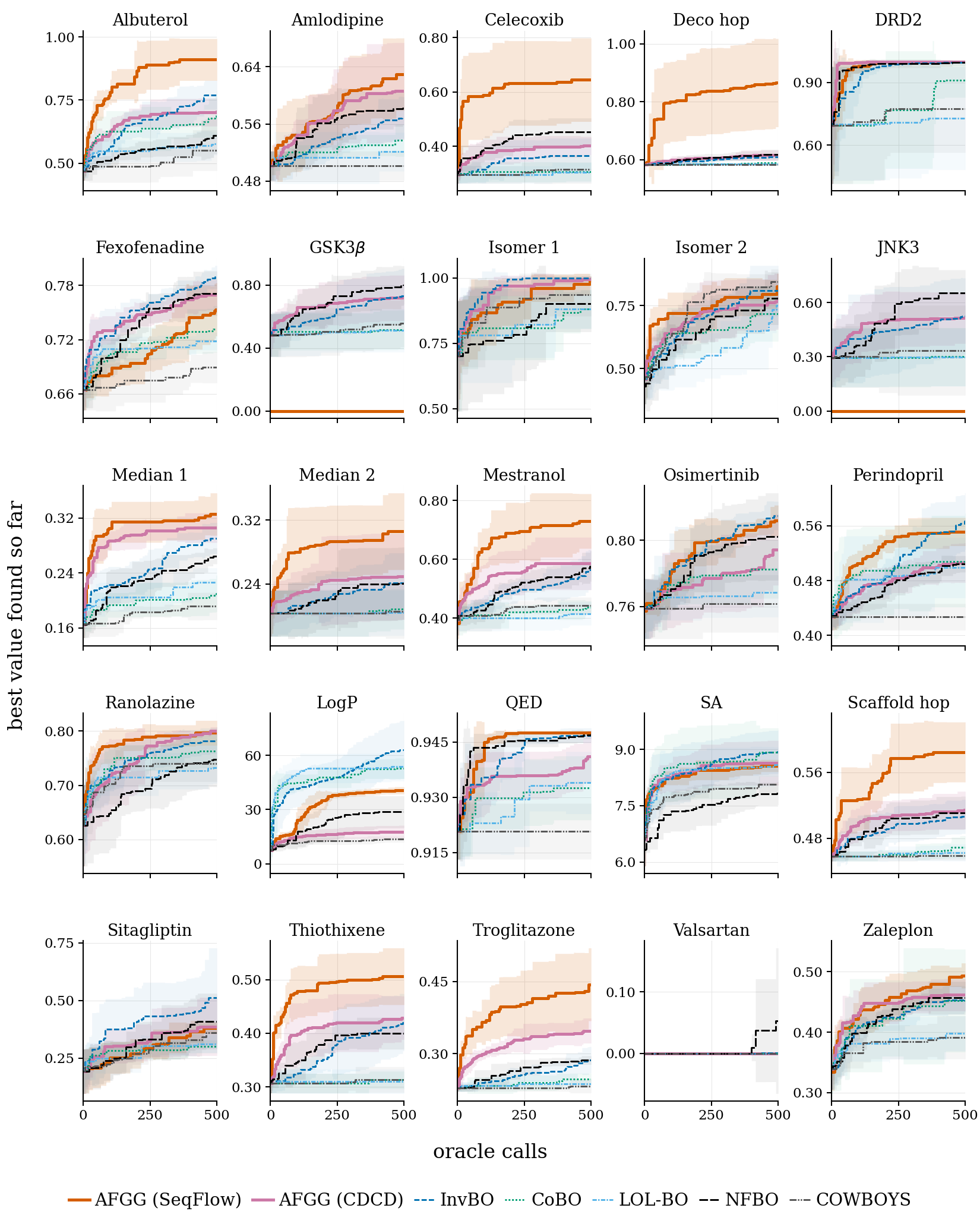}
    \caption{\textbf{Batch regime, coupled generation}: 500 acquisitions (100 initial points, then 100 epochs of 5), all 25 objectives. }
    \label{fig:25-obj-batch-coupled}
\end{figure}

\begin{table}[t]
\centering
\small
\resizebox{\textwidth}{!}{%
\begin{tabular}{lrrrrrrr}
\toprule
 & \multicolumn{5}{c}{baselines} & \multicolumn{2}{c}{AFGG} \\
\cmidrule(lr){2-6}\cmidrule(lr){7-8}
Objective & CoBO & InvBO & LOL-BO & NFBO & COWBOYS & CDCD & SeqFlow \\
\midrule
\texttt{albu\_sim} & 0.692 $\pm$ 0.095 & 0.771 $\pm$ 0.057 & 0.579 $\pm$ 0.043 & 0.609 $\pm$ 0.073 & 0.551 $\pm$ 0.081 & 0.703 $\pm$ 0.057 & \cellcolor[HTML]{D6F0D6}\textbf{0.910 $\pm$ 0.083} \\
\texttt{amlo\_mpo} & 0.540 $\pm$ 0.021 & 0.569 $\pm$ 0.029 & 0.522 $\pm$ 0.026 & 0.582 $\pm$ 0.024 & 0.501 $\pm$ 0.021 & \textbf{0.606 $\pm$ 0.068} & \textbf{0.630 $\pm$ 0.051} \\
\texttt{cele\_red} & 0.307 $\pm$ 0.035 & 0.366 $\pm$ 0.064 & 0.304 $\pm$ 0.036 & 0.452 $\pm$ 0.052 & 0.316 $\pm$ 0.052 & 0.402 $\pm$ 0.034 & \cellcolor[HTML]{D6F0D6}\textbf{0.645 $\pm$ 0.153} \\
\texttt{deco\_hop} & 0.588 $\pm$ 0.005 & 0.610 $\pm$ 0.007 & 0.586 $\pm$ 0.004 & 0.617 $\pm$ 0.018 & 0.583 $\pm$ 0.004 & 0.615 $\pm$ 0.018 & \cellcolor[HTML]{D6F0D6}\textbf{0.866 $\pm$ 0.153} \\
\texttt{drd2} & 0.930 $\pm$ 0.090 & 0.996 $\pm$ 0.006 & 0.728 $\pm$ 0.247 & 0.994 $\pm$ 0.012 & 0.774 $\pm$ 0.210 & \cellcolor[HTML]{D6F0D6}\textbf{1.000 $\pm$ 0.000} & \cellcolor[HTML]{D6F0D6}\textbf{1.000 $\pm$ 0.000} \\
\texttt{fexo\_mpo} & 0.732 $\pm$ 0.025 & \cellcolor[HTML]{D6F0D6}\textbf{0.789 $\pm$ 0.013} & 0.718 $\pm$ 0.010 & \textbf{0.771 $\pm$ 0.025} & 0.689 $\pm$ 0.017 & 0.770 $\pm$ 0.017 & 0.753 $\pm$ 0.029 \\
\texttt{gsk3b} & 0.516 $\pm$ 0.125 & 0.730 $\pm$ 0.174 & 0.512 $\pm$ 0.138 & 0.802 $\pm$ 0.120 & 0.558 $\pm$ 0.151 & 0.718 $\pm$ 0.144 & \cellcolor[HTML]{D6F0D6}\textbf{0.950 $\pm$ 0.044} \\
\texttt{iso\_c7} & 0.880 $\pm$ 0.074 & \cellcolor[HTML]{D6F0D6}\textbf{1.000 $\pm$ 0.000} & 0.880 $\pm$ 0.074 & 0.901 $\pm$ 0.107 & 0.935 $\pm$ 0.081 & 0.995 $\pm$ 0.011 & 0.995 $\pm$ 0.011 \\
\texttt{iso\_c9} & 0.718 $\pm$ 0.110 & 0.808 $\pm$ 0.100 & 0.758 $\pm$ 0.074 & 0.778 $\pm$ 0.104 & \textbf{0.843 $\pm$ 0.051} & 0.777 $\pm$ 0.127 & 0.825 $\pm$ 0.052 \\
\texttt{jnk3} & 0.300 $\pm$ 0.161 & 0.524 $\pm$ 0.211 & 0.298 $\pm$ 0.161 & 0.652 $\pm$ 0.128 & 0.336 $\pm$ 0.248 & 0.514 $\pm$ 0.140 & \textbf{0.704 $\pm$ 0.110} \\
\texttt{med\_1} & 0.212 $\pm$ 0.026 & 0.291 $\pm$ 0.023 & 0.228 $\pm$ 0.022 & 0.265 $\pm$ 0.012 & 0.192 $\pm$ 0.020 & \textbf{0.306 $\pm$ 0.022} & \cellcolor[HTML]{D6F0D6}\textbf{0.326 $\pm$ 0.031} \\
\texttt{med\_2} & 0.211 $\pm$ 0.024 & 0.241 $\pm$ 0.037 & 0.204 $\pm$ 0.028 & 0.241 $\pm$ 0.045 & 0.205 $\pm$ 0.034 & \textbf{0.249 $\pm$ 0.054} & \cellcolor[HTML]{D6F0D6}\textbf{0.306 $\pm$ 0.048} \\
\texttt{mest\_sim} & 0.439 $\pm$ 0.020 & 0.565 $\pm$ 0.068 & 0.428 $\pm$ 0.038 & 0.577 $\pm$ 0.052 & 0.443 $\pm$ 0.028 & \textbf{0.587 $\pm$ 0.089} & \cellcolor[HTML]{D6F0D6}\textbf{0.730 $\pm$ 0.093} \\
\texttt{osim\_mpo} & 0.786 $\pm$ 0.019 & \textbf{0.815 $\pm$ 0.009} & 0.768 $\pm$ 0.015 & 0.803 $\pm$ 0.027 & 0.762 $\pm$ 0.017 & 0.794 $\pm$ 0.021 & 0.812 $\pm$ 0.009 \\
\texttt{peri\_mpo} & 0.509 $\pm$ 0.069 & \textbf{0.567 $\pm$ 0.041} & 0.499 $\pm$ 0.044 & 0.504 $\pm$ 0.012 & 0.427 $\pm$ 0.019 & 0.506 $\pm$ 0.029 & 0.551 $\pm$ 0.021 \\
\texttt{rano\_mpo} & 0.763 $\pm$ 0.031 & 0.784 $\pm$ 0.025 & 0.732 $\pm$ 0.028 & 0.748 $\pm$ 0.051 & 0.740 $\pm$ 0.033 & \cellcolor[HTML]{D6F0D6}\textbf{0.800 $\pm$ 0.007} & \textbf{0.797 $\pm$ 0.023} \\
\texttt{logp} & \cellcolor[HTML]{D6F0D6}\textbf{52.790 $\pm$ 5.679} & \cellcolor[HTML]{D6F0D6}\textbf{63.296 $\pm$ 16.073} & \cellcolor[HTML]{D6F0D6}\textbf{53.962 $\pm$ 8.208} & 29.014 $\pm$ 8.537 & 13.554 $\pm$ 1.710 & 17.799 $\pm$ 3.424 & 40.594 $\pm$ 2.014 \\
\texttt{qed} & 0.934 $\pm$ 0.008 & 0.947 $\pm$ 0.002 & 0.934 $\pm$ 0.008 & 0.947 $\pm$ 0.001 & 0.921 $\pm$ 0.008 & 0.941 $\pm$ 0.007 & \cellcolor[HTML]{D6F0D6}\textbf{0.948 $\pm$ 0.000} \\
\texttt{sa} & \textbf{8.952 $\pm$ 0.668} & \textbf{8.947 $\pm$ 0.581} & 8.590 $\pm$ 0.473 & 7.841 $\pm$ 0.310 & 8.066 $\pm$ 0.225 & 8.645 $\pm$ 0.597 & 8.570 $\pm$ 0.163 \\
\texttt{scaf\_hop} & 0.479 $\pm$ 0.027 & 0.506 $\pm$ 0.021 & 0.463 $\pm$ 0.010 & 0.511 $\pm$ 0.020 & 0.459 $\pm$ 0.005 & \textbf{0.514 $\pm$ 0.023} & \cellcolor[HTML]{D6F0D6}\textbf{0.584 $\pm$ 0.036} \\
\texttt{sita\_mpo} & 0.310 $\pm$ 0.036 & \textbf{0.512 $\pm$ 0.216} & 0.319 $\pm$ 0.032 & \textbf{0.410 $\pm$ 0.121} & 0.360 $\pm$ 0.047 & 0.387 $\pm$ 0.126 & 0.381 $\pm$ 0.025 \\
\texttt{thio\_red} & 0.327 $\pm$ 0.035 & 0.420 $\pm$ 0.053 & 0.311 $\pm$ 0.022 & 0.400 $\pm$ 0.030 & 0.313 $\pm$ 0.024 & \textbf{0.429 $\pm$ 0.040} & \cellcolor[HTML]{D6F0D6}\textbf{0.506 $\pm$ 0.053} \\
\texttt{trog\_red} & 0.250 $\pm$ 0.009 & 0.287 $\pm$ 0.028 & 0.237 $\pm$ 0.008 & 0.286 $\pm$ 0.009 & 0.232 $\pm$ 0.014 & \cellcolor[HTML]{D6F0D6}\textbf{0.348 $\pm$ 0.026} & \cellcolor[HTML]{D6F0D6}\textbf{0.444 $\pm$ 0.076} \\
\texttt{vals\_sma} & 0.000 $\pm$ 0.000 & \textbf{0.001 $\pm$ 0.002} & 0.000 $\pm$ 0.000 & \textbf{0.053 $\pm$ 0.118} & 0.000 $\pm$ 0.000 & 0.000 $\pm$ 0.000 & 0.000 $\pm$ 0.000 \\
\texttt{zale\_mpo} & 0.454 $\pm$ 0.083 & 0.455 $\pm$ 0.022 & 0.401 $\pm$ 0.027 & 0.457 $\pm$ 0.027 & 0.391 $\pm$ 0.036 & \textbf{0.462 $\pm$ 0.027} & \cellcolor[HTML]{D6F0D6}\textbf{0.493 $\pm$ 0.021} \\
\bottomrule
\end{tabular}}
\caption{\textbf{Batch regime, coupled generation.} Final best score on all 25 PMO objectives after 500 acquisitions (100 initial points, then 100 epochs of 5), mean $\pm$ sample standard deviation over 5 seeds. AFGG generates each batch \emph{coupled} here: one joint qLogEI tilt per candidate batch, so its members are generated aware of each other. Every cell is judged against the opposing group: an AFGG cell against the five baselines, a baseline cell against the two AFGG columns. \textbf{Bold}: the cell's mean exceeds every mean in the opposing group. \colorbox[HTML]{D6F0D6}{Green}: that lead is significant, exceeding the cell's own standard deviation; bold without green therefore means ahead but within noise.}
\label{tab:afgg-batch-coupled}
\end{table}

\subsection{Effect of the \rev{Generative Prior}}
\label{sec:generative-prior}

AFGG (CDCD) and AFGG (SeqFlow) share the same acquisition-guided machinery, but differ in the generative model being steered: \tabref{tab:gen-quality} isolates that component: we draw $5000$ unconditional samples from each model and score them against a $30$k-molecule subset of GuacaMol. Validity, uniqueness and novelty do not separate the two models: all six values exceed $0.96$ and the residual differences, below two points, are in CDCD's \rev{favor}. FCD does separate them, by a factor of roughly $3.5$: SeqFlow reaches $6.02$ against CDCD's $20.99$, i.e. its samples sit far closer to the reference distribution in ChemNet feature space. Guidance tilts the prior rather than extending its support, so what the acquisition function can propose is confined to the region the prior already covers, and a prior whose mass is better aligned with the data distribution gives the search a better-conditioned region to act on.

\begin{table}[t]
\centering
\begin{tabular}{lcc}
\toprule
Metric & CDCD & SeqFlow \\
\midrule
Validity $\uparrow$   & \textbf{1.0000}             & 0.9972 \\
Uniqueness $\uparrow$ & \textbf{0.9932}             & 0.9862 \\
Novelty $\uparrow$    & \textbf{0.9956}             & 0.9838 \\
FCD $\downarrow$      & 20.99                       & \textbf{6.02} \\
\bottomrule
\end{tabular}
\caption{Unconditional sample quality. $5000$ molecules drawn from each generative model and scored against a $30$k-molecule subset of GuacaMol.}
\label{tab:gen-quality}
\end{table}

\subsection{Hyperparameter \rev{Tuning for} CDCD}
\label{sec:hyperparameter-tuning}

Every column reports the final best score after 300 oracle calls, mean $\pm$ sample standard deviation over 5 seeds. \rev{All ablations use the CDCD backbone in the sequential setting} (10 initial points, 300 oracle calls, 1 point per epoch). Every objective is scored on $[0,1]$; bold marks the best mean in each row (ties at the printed precision are all bolded), and \emph{Wins} counts only rows with a unique winner. 

\paragraph{\rev{Guidance on vs.\ off.}}
By setting $\lambda=0$, it has been proven that the generative model is not the main driver for optimizing the function. Having an additional force, represented by the \rev{acquisition} function\rev{,} has a \rev{dramatic} effect \rev{on performance (see} \tabref{tab:ablation-guidance} for full results\rev{).}
\begin{table}[t]
\centering
\small
\begin{tabular}{lrr}
\toprule
 & \multicolumn{2}{c}{Guidance on vs off} \\
\cmidrule(lr){2-3}
Objective & on & off ($\lambda{=}0$) \\
\midrule
\texttt{osim\_mpo} & 0.714 $\pm$ 0.064 & \textbf{0.722 $\pm$ 0.048} \\
\texttt{med\_2} & \textbf{0.199 $\pm$ 0.013} & 0.173 $\pm$ 0.012 \\
\texttt{amlo\_mpo} & \textbf{0.486 $\pm$ 0.053} & 0.408 $\pm$ 0.018 \\
\texttt{zale\_mpo} & 0.420 $\pm$ 0.054 & \textbf{0.431 $\pm$ 0.030} \\
\texttt{rano\_mpo} & \textbf{0.788 $\pm$ 0.012} & 0.745 $\pm$ 0.026 \\
\texttt{peri\_mpo} & \textbf{0.443 $\pm$ 0.018} & 0.405 $\pm$ 0.073 \\
\texttt{trog\_red} & \textbf{0.308 $\pm$ 0.020} & 0.238 $\pm$ 0.016 \\
\midrule
MEAN & 0.480 & 0.446 \\
Wins & 5 & 2 \\
\bottomrule
\end{tabular}
\caption{Guidance on vs.\ off; \emph{off} sets $\lambda=0$, flattening the SNIS weights to uniform so the tilted mean equals the untilted one. }
\label{tab:ablation-guidance}
\end{table}

\paragraph{Guidance estimator.} SNIS and REINFORCE target the same quantity, $\ztilt$ (\propref{prop:reinforce} and \propref{prop:snis}). SNIS forms it directly as a weighted average of the sampled embeddings, $\sum_n h_n E(w^{(n)})$: a closed-form combination with no gradient involved \eqref{eq:snis}. REINFORCE instead routes through \eqref{eq:reinforce}, a score-function estimator — these are well known to carry substantially higher variance than a direct, closed-form average for a fixed sample budget, and at $N=10$ trajectories per epoch that variance shows: SNIS wins all seven objectives, several by a wide margin (\texttt{zale\_mpo}: 0.420 vs.\ 0.250), \tabref{tab:ablation-estimator} for full results.

\begin{table}[t]
\centering
\small
\begin{tabular}{lrr}
\toprule
 & \multicolumn{2}{c}{Guidance estimator} \\
\cmidrule(lr){2-3}
Objective & SNIS (closed form) & REINFORCE \\
\midrule
\texttt{osim\_mpo} & \textbf{0.714 $\pm$ 0.064} & 0.671 $\pm$ 0.033 \\
\texttt{med\_2} & \textbf{0.199 $\pm$ 0.013} & 0.170 $\pm$ 0.010 \\
\texttt{amlo\_mpo} & \textbf{0.486 $\pm$ 0.053} & 0.410 $\pm$ 0.062 \\
\texttt{zale\_mpo} & \textbf{0.420 $\pm$ 0.054} & 0.250 $\pm$ 0.085 \\
\texttt{rano\_mpo} & \textbf{0.788 $\pm$ 0.012} & 0.687 $\pm$ 0.036 \\
\texttt{peri\_mpo} & \textbf{0.443 $\pm$ 0.018} & 0.382 $\pm$ 0.055 \\
\texttt{trog\_red} & \textbf{0.308 $\pm$ 0.020} & 0.294 $\pm$ 0.013 \\
\midrule
MEAN & 0.480 & 0.409 \\
Wins & 7 & 0 \\
\bottomrule
\end{tabular}
\caption{Guidance estimator: closed-form SNIS vs.\ REINFORCE (score-function estimator with a leave-one-out baseline) in place of the mean-swap identity.}
\label{tab:ablation-estimator}
\end{table}

\paragraph{Acquisition \rev{function.}} \tabref{tab:ablation-acquisition} shows \rev{AFGG's versatility: it can be} applied with any \rev{acquisition} function. \rev{As expected}, EI yields higher \rev{performance} than PI or UCB.

\begin{table}[t]
\centering
\small
\begin{tabular}{lrrr}
\toprule
 & \multicolumn{3}{c}{Acquisition function} \\
\cmidrule(lr){2-4}
Objective & EI & PI & UCB \\
\midrule
\texttt{osim\_mpo} & \textbf{0.714 $\pm$ 0.064} & 0.708 $\pm$ 0.065 & 0.696 $\pm$ 0.041 \\
\texttt{med\_2} & \textbf{0.199 $\pm$ 0.013} & 0.183 $\pm$ 0.015 & 0.170 $\pm$ 0.009 \\
\texttt{amlo\_mpo} & 0.486 $\pm$ 0.053 & \textbf{0.495 $\pm$ 0.029} & 0.490 $\pm$ 0.013 \\
\texttt{zale\_mpo} & 0.420 $\pm$ 0.054 & 0.439 $\pm$ 0.045 & \textbf{0.445 $\pm$ 0.055} \\
\texttt{rano\_mpo} & \textbf{0.788 $\pm$ 0.012} & 0.753 $\pm$ 0.026 & 0.746 $\pm$ 0.037 \\
\texttt{peri\_mpo} & 0.443 $\pm$ 0.018 & \textbf{0.469 $\pm$ 0.053} & 0.391 $\pm$ 0.044 \\
\texttt{trog\_red} & \textbf{0.308 $\pm$ 0.020} & 0.300 $\pm$ 0.016 & 0.235 $\pm$ 0.015 \\
\midrule
MEAN & 0.480 & 0.478 & 0.453 \\
Wins & 4 & 2 & 1 \\
\bottomrule
\end{tabular}
\caption{Acquisition function driving both the guidance tilt and candidate selection; UCB uses $\beta=0.1$\rev{.}}
\label{tab:ablation-acquisition}
\end{table}

\paragraph{Candidate pool size.} Moving from 1 to 10 trajectories captures essentially all the achievable gain (mean $0.470\to0.480$); doubling again to 20 does not clearly help (mean $0.477$, within a single row's typical standard deviation of the 10-trajectory result), while doubling the per-epoch generation cost. We fix 10 as the point where that curve has flattened, \tabref{tab:ablation-pool} for full results.

\begin{table}[t]
\centering
\small
\begin{tabular}{lrrrr}
\toprule
 & \multicolumn{4}{c}{Candidate pool size} \\
\cmidrule(lr){2-5}
Objective & 1 & 5 & 10 & 20 \\
\midrule
\texttt{osim\_mpo} & \textbf{0.736 $\pm$ 0.010} & 0.712 $\pm$ 0.053 & 0.714 $\pm$ 0.064 & 0.711 $\pm$ 0.063 \\
\texttt{med\_2} & 0.186 $\pm$ 0.012 & 0.192 $\pm$ 0.008 & 0.199 $\pm$ 0.013 & \textbf{0.204 $\pm$ 0.007} \\
\texttt{amlo\_mpo} & 0.486 $\pm$ 0.045 & \textbf{0.504 $\pm$ 0.038} & 0.486 $\pm$ 0.053 & 0.496 $\pm$ 0.029 \\
\texttt{zale\_mpo} & \textbf{0.432 $\pm$ 0.048} & 0.411 $\pm$ 0.084 & 0.420 $\pm$ 0.054 & 0.406 $\pm$ 0.060 \\
\texttt{rano\_mpo} & 0.764 $\pm$ 0.008 & 0.768 $\pm$ 0.019 & \textbf{0.788 $\pm$ 0.012} & 0.771 $\pm$ 0.015 \\
\texttt{peri\_mpo} & 0.407 $\pm$ 0.045 & 0.448 $\pm$ 0.047 & 0.443 $\pm$ 0.018 & \textbf{0.453 $\pm$ 0.031} \\
\texttt{trog\_red} & 0.274 $\pm$ 0.018 & 0.288 $\pm$ 0.015 & \textbf{0.308 $\pm$ 0.020} & 0.294 $\pm$ 0.011 \\
\midrule
MEAN & 0.470 & 0.475 & 0.480 & 0.477 \\
Wins & 2 & 1 & 2 & 2 \\
\bottomrule
\end{tabular}
\caption{Candidate pool size: guided trajectories rolled out per epoch, of which the best by acquisition value is evaluated.}
\label{tab:ablation-pool}
\end{table}

\paragraph{Where the temperature acts.} Power tilt reshapes the SNIS weights, $h_n\propto\ell_n^{1/\gamma}$ \rev{(see} \eqref{eq:power_tilt}\rev{)} — always a convex combination of the sampled embeddings, so the guided target cannot leave their span. Scale drift instead rescales the drift after the tilted mean is formed \rev{(see} \eqref{eq:linear_tilt}\rev{)}: at $\lambda>1$ this extrapolates past $\rev{\ztilt}$, pushing the implied target outside the convex hull of what was actually sampled and decoded — a region the frozen denoiser never saw in training. On this seven-objective slice the two mechanisms land close together (means $0.474$–$0.486$, all within a row's typical standard deviation), so we are not claiming a large empirical gap here; the convex-hull argument, not this table, is why we default to power tilt at higher temperatures; \tabref{tab:ablation-temperature} for full results.

\begin{table}[t]
\centering
\small
\begin{tabular}{lrrrr}
\toprule
 & \multicolumn{4}{c}{Where the temperature acts} \\
\cmidrule(lr){2-5}
Objective & power tilt $\gamma{=}2$ & scale drift $\lambda{=}0.5$ & scale drift $\lambda{=}2$ & scale drift $\lambda{=}5$ \\
\midrule
\texttt{osim\_mpo} & \textbf{0.714 $\pm$ 0.064} & 0.702 $\pm$ 0.034 & 0.694 $\pm$ 0.040 & 0.682 $\pm$ 0.026 \\
\texttt{med\_2} & \textbf{0.199 $\pm$ 0.013} & 0.194 $\pm$ 0.010 & 0.191 $\pm$ 0.013 & 0.182 $\pm$ 0.007 \\
\texttt{amlo\_mpo} & 0.486 $\pm$ 0.053 & 0.474 $\pm$ 0.029 & 0.423 $\pm$ 0.029 & \textbf{0.496 $\pm$ 0.051} \\
\texttt{zale\_mpo} & 0.420 $\pm$ 0.054 & \textbf{0.449 $\pm$ 0.026} & 0.402 $\pm$ 0.041 & 0.401 $\pm$ 0.029 \\
\texttt{rano\_mpo} & \textbf{0.788 $\pm$ 0.012} & 0.750 $\pm$ 0.031 & 0.782 $\pm$ 0.023 & 0.773 $\pm$ 0.023 \\
\texttt{peri\_mpo} & 0.443 $\pm$ 0.018 & \textbf{0.458 $\pm$ 0.043} & 0.455 $\pm$ 0.056 & 0.407 $\pm$ 0.029 \\
\texttt{trog\_red} & \textbf{0.308 $\pm$ 0.020} & 0.278 $\pm$ 0.014 & 0.300 $\pm$ 0.022 & 0.302 $\pm$ 0.027 \\
\midrule
MEAN & 0.480 & 0.472 & 0.464 & 0.463 \\
Wins & 4 & 2 & 0 & 1 \\
\bottomrule
\end{tabular}
\caption{Different temperature parameters and how they affect the final generation process\rev{.}}
\label{tab:ablation-temperature}
\end{table}

\end{document}